
\documentclass[conference]{IEEEtran}
\newcommand{\asconv}{\overset{\text{a.s.}}{\longrightarrow}}
\newcommand{\ra}{\rightarrow}

\newcommand{\Lb}{\left[}
\newcommand{\Rb}{\right]}
\newcommand{\lb}{\left(}
\newcommand{\rb}{\right)}
\newcommand{\trace}{\textnormal{trace}}
\newcommand{\diag}{\textnormal{diag}}


\newcommand{\none}{{n_1}}
\newcommand{\ntwo}{{n_2}}
\newcommand{\nmin}{{n_{\min}}}
\newcommand{\nmax}{{n_{\max}}}

\newcommand{\Kmax}{K_{\max}}

\newcommand{\bone}{\mathbf{1}}
\newcommand{\bzero}{\mathbf{0}}

\newcommand{\bx}{\mathbf{x}}
\newcommand{\by}{\mathbf{y}}

\newcommand{\bb}{\mathbf{b}}

\newcommand{\bz}{\mathbf{z}}

\newcommand{\byt}{\widetilde{\mathbf{y}}}

\newcommand{\bnu}{\boldsymbol{\nu}}


\newcommand{\bP}{\mathbf{P}}
\newcommand{\bB}{\mathbf{B}}

\newcommand{\bX}{\mathbf{X}}
\newcommand{\bY}{\mathbf{Y}}

\newcommand{\bI}{\mathbf{I}}

\newcommand{\bO}{\mathbf{O}}
\newcommand{\bU}{\mathbf{U}}

\newcommand{\btY}{\widetilde{\mathbf{Y}}}
\newcommand{\bhY}{\widehat{\mathbf{Y}}}
\newcommand{\bLN}{\mathbf{L}_{\mathcal{N}}}

\newcommand{\bDelta}{\mathbf{\Delta}}
\newcommand{\bA}{\mathbf{A}}

\newcommand{\bM}{\mathbf{M}}
\newcommand{\bD}{\mathbf{D}}
\newcommand{\bL}{\mathbf{L}}

\newcommand{\bLambda}{\mathbf{\Lambda}}

\newcommand{\bAbar}{\mathbf{\overline{A}}}

\newcommand{\bAt}{\widetilde{\mathbf{A}}}

\newcommand{\cS}{\mathcal{S}}

\newcommand{\cG}{\mathcal{G}}
\newcommand{\cV}{\mathcal{V}}
\newcommand{\cE}{\mathcal{E}}

\newcommand{\cM}{\mathcal{M}}
\newcommand{\cF}{\mathcal{F}}

\newcommand{\bbR}{\mathbb{R}}

%
\ifCLASSINFOpdf
\else
\fi
\hyphenation{op-tical net-works semi-conduc-tor}

\usepackage{booktabs} 
\usepackage{graphicx}
\usepackage{epstopdf}
\usepackage{algorithm}
\usepackage{algorithmicx}
\usepackage{algpseudocode}
\usepackage{color}
\usepackage{amsmath,amssymb,amsfonts,amsthm}
\usepackage{subcaption}
\usepackage{multirow}
\usepackage{verbatim}
\usepackage{url}
\usepackage{hyperref}     
\usepackage[dvipsnames]{xcolor}
\newtheorem{lemma} {Lemma}
\newtheorem{theorem} {Theorem}
\newtheorem{corollary} {Corollary}

\begin{document}
%
\title{Revisiting Spectral Graph Clustering with Generative Community Models}


\author{
	\IEEEauthorblockN{Pin-Yu Chen and Lingfei Wu}
	\IEEEauthorblockA{IBM Thomas J. Waston Research Center\\Yorktown Heights, NY 10598, USA\\
		Email: pin-yu.chen@ibm.com and wuli@us.ibm.com}
}



%


\maketitle

\begin{abstract}
The methodology of community detection can be divided into two principles: imposing a network model on a given graph, or optimizing a designed objective function. The former provides guarantees on theoretical detectability but falls short when the graph is inconsistent with the underlying model. The latter is model-free but fails to provide quality assurance for the detected communities. In this paper, we propose a novel unified framework to combine the advantages of these two principles. The presented method, SGC-GEN, not only considers the detection error caused by the corresponding model mismatch to a given graph, but also yields a theoretical guarantee on community detectability by analyzing Spectral Graph Clustering (SGC) under GENerative community models (GCMs). SGC-GEN incorporates the predictability on correct community detection with a measure of community fitness to GCMs. It resembles the formulation of supervised learning problems by enabling various community detection loss functions and model mismatch metrics. We further establish a theoretical condition for correct community detection using the normalized graph Laplacian matrix under a GCM, which provides a novel data-driven loss function for SGC-GEN. In addition, we present an effective algorithm to implement SGC-GEN, and show that the computational complexity of SGC-GEN is comparable to the baseline methods. Our experiments on 18 real-world datasets demonstrate that SGC-GEN possesses superior and robust performance compared to 6 baseline methods under 7 representative clustering metrics.
\end{abstract}



\section{Introduction}
Community detection aims to assign community labels to nodes in a graph such that the nodes in the same community share higher similarity (better connectivity) than the nodes in different communities \cite{Fortunato10}. It is essentially an unsupervised learning problem since one is only provided with the information of graph connectivity. 
Despite its unsupervised nature, recent research developments have been able to identify the informational and algorithmic limits of community detection under certain generative community models (GCMs), especially for spectral graph clustering (SGC) algorithms, such as the use of eigenvectors of the graph Laplacian matrices \cite{Luxburg07} or the modularity matrix \cite{Newman06community} for community detection. However, these analysis assuming that GCMs well match a graph may not hold in practice, which may often yield poor community detection results when there is a mismatch between the given graph and the underlying GCM. On the other hand, optimizing a designed objective function for community detection, such as normalized cut \cite{White05} or modularity \cite{Newman04mod}, imposes no model assumption but is sensitive in community detection \cite{leskovec2010empirical,yang2015defining}.

Motivated by the advantages of the theoretical and objective principles, we propose SGC-GEN, a novel unified community detection framework that possesses the following features: \\
$\bullet$ \textbf{The power of community detectability.} Under GCMs, the theoretical analysis of community detectability allows us to assess the quality of communities by converting the theoretical guarantees to a loss function that quantifies the error in community detection.\\
$\bullet$ \textbf{The constraint to model mismatch.} By imposing an error metric on the level of inconsistency between a given graph and a GCM, one can confine the detection error due to model mismatch and hence improve community detection. 

In particular, due to the extraordinary performance of SGC based on the normalized graph Laplacian matrix, a number of variants of SGC methods have been proposed to improve clustering performance in terms of scalability, robustness, and applicability. To provide a thorough analysis, in this paper we focus on the  standard formulation  of SGC based on the normalized graph Laplacian matrix introduced by the seminal works (see Sec. \ref{subsec_NGL}) \cite{Shi00,ng2002spectral,Luxburg07}. The main line of this paper is to demonstrate the effectiveness of SGC-GEN that combines standard SGC with GCMs \cite{Holland83} in an unified framework. Originated from the standard SGC formulation as presented in Sec. \ref{subsec_NGL}, SGC-GEN can  easily be generalized to many state-of-the-art SGC methods \cite{liu2013large,nie2014clustering,li2016scalable,nie2016constrained}. By revisiting the standard formulation of SGC with GCMs, we establish a novel condition on correct community detection using SGC via the normalized graph Laplacian matrix under a GCM called the stochastic block model (SBM) \cite{Holland83}. We then convert this condition to a data-driven community detection loss function and apply it to SGC-GEN to develop effective and computationally-efficient community detection methods. 
We highlight our contributions as following: \\
$\bullet$ We propose SGC-GEN, a unified community detection framework combining the principles of theoretical detectability and well-designed objective functions for improvement.  \\
$\bullet$ We establish a condition on the correctness of community detection using SGC under a SBM, which leads to a novel data-driven community loss function for SGC-GEN. Moreover, since the loss function enables community quality assessment, the proposed SGC-GEN resembles the formulation of a supervised learning problem consisting of a loss function and a regularization function. \\
$\bullet$  We present an algorithm for SGC-GEN and conduct rigorous computational analysis showing that SGC-GEN could be implemented as efficient as other baseline methods.\\
$\bullet$ We compare the performance of community detection on 18 real-life graph datasets and use 7 representative clustering metrics to rank each method. The experimental results show that joint consideration of theoretical detectability and model mismatch using SGC-GEN can substantially improve community detection when compared to 6 baseline community detection methods of similar objective functions. 


\section{General Framework}
\label{sec_framework}
\subsection{Notations}

Throughout this paper bold uppercase letters (e.g., $\bX$ or $\bX_{k}$) denote matrices
and $[\bX]_{ij}$ denotes the entry in the $i$-th row and the $j$-th
column of $\bX$, bold lowercase letters (e.g., $\bx$ or $\bx_k$) denote
column vectors, the term $\cdot^T$ denotes matrix or vector transpose, italic
letters (e.g., $x$, $x_k$ or $X$) denote scalars, and calligraphic uppercase letters
(e.g., $\mathcal{X}$ or $\mathcal{X}_i$) denote sets. The term $\cG=(\cV,\cE)$ denotes a graph characterized by a node set $\cV$ and an edge set $\cE=\{(i,j):i,j \in \cV\}$. The number of nodes and edges in $\cG$ are denoted by $n$ and $m$, respectively.
The convergence of a real rectangular matrix $\bX \in \bbR^{n_1 \times n_2}$ is with respect to the spectral norm, which is defined as $\|\bX\|_2=\max_{\bz \in \bbR^{n_2}, \bz^T \bz=1} \|\bX \bz\|_2$, where $\|\bz\|_2$ denotes the Euclidean norm of a vector $\bz$.  Based on the definition, $\|\bX\|_2$ is equivalent to the largest singular value of $\bX$.  
A matrix $\bX \in \bbR^{n_1 \times n_2}$ is said to converge to another matrix $\bM$ of the same dimension if $\|\bX-\bM\|_2$
 approaches zero. For the convenience of notation, we write $\bX \ra \bM$
if  $\|\bX-\bM\|_2 \ra 0$
as $n_1,n_2 \ra \infty$.

\subsection{Preliminaries}
Throughout this paper, we consider the problem of non-overlapping community detection in a simple connected graph that is undirected, unweighted and contains no self-loops. Given a graph $\cG=(\cV,\cE)$ and the number of communities $K$,  non-overlapping community detection aims to assign each node a community label and divide the nodes into $K$ communities such that the nodes in the same community are better connected than nodes in different communities.

\textbf{Spectral graph clustering (SGC).} SGC is a widely used technique for community detection. It transforms a graph into a vector space representation via spectral decomposition of a matrix associated with a graph. Specifically, each node in the graph is represented by a low-dimensional vector using a common subset of eigenvectors of a matrix. Based on the vector space representation, K-means clustering is applied to obtain $K$ communities . One typical example of SGC is the normalized graph Laplacian matrix \cite{Luxburg07}, where its $K$ smallest eigenvectors are used  for community detection \cite{White05}.

\textbf{Generative community model (GCM).} A GCM generates a graph that embeds community structures \cite{Goldenberg2010survey}. A GCM can be either parametric or nonparametric. For example, the stochastic block model (SBM) \cite{Holland83} is a parametric GCM that specifies a set of within-community and between-community edge connection probability parameters. The graphon model \cite{diaconis2007graph,zhang2015estimating} is a nonparametric GCM that generates a graph based on latent representations. 
Different GCMs are discussed in the survey paper \cite{Goldenberg2010survey}.

\textbf{SGC under GCMs.} 
For graphs generated by certain GCMs, recent research findings suggest that the performance of community detection using SGC can be separated into two regimes \cite{abbe2016exact}: a \textit{detectable regime} where the detected communities are consistent with the ground-truth communities, and an  \textit{undetectable regime} where the detected  communities and the ground-truth communities are inconsistent. Moreover, the critical space that separates these two  regimes can be specified. Consequently, the problem of evaluating the quality of detected communities can be converted to the problem of estimating to which regime the given graph belongs. More details are given in the related work section (Sec. \ref{sec_related}).


%

\subsection{Problem Formulation of SGC-GEN}
Consider community detection in a graph $\cG$ with an unknown number of communities. For each possible number of communities $K$, we can provide quantitative measures on community detectability and model mismatch for SGC under GCMs. Specifically, given a GCM $\cM$ of $K$ communities and a set of communities $\{\cG_k\}_{k=1}^K$ detected by a SGC method $\cF$,  the corresponding community detection loss function and model mismatch metric are as follows.

\textbf{Community detection loss function.} For any $\cF$, $\cM$ and $\{\cG_k\}_{k=1}^K$, let $f(\{\cG_k\}_{k=1}^K,\cF,\cM)$ denote a nonnegative loss function that reflects the level of incorrect community detection using $\cF$ under $\cM$. Higher loss suggests the detected communities $\{\cG_k\}_{k=1}^K$ are less reliable.

\textbf{Model mismatch metric.} Let $R(\{\cG_k\}_{k=1}^K,\cM)$ be a real-valued function quantifying the difference between the detected communities $\{\cG_k\}_{k=1}^K$ using $\cF$ and the underlying GCM $\cM$. Larger value of $R$ suggests the detected communities $\{\cG_k\}_{k=1}^K$ are less consistent with the assumption of $\cM$.

\textbf{SGC-GEN.} Inspired by the formation of supervised learning problems, community detection, albeit an unsupervised learning problem, can be formulated in a similar fashion by specifying a  community detection loss function $f$ and a model mismatch metric $R$. Given a maximum number of communities $\Kmax$, a SGC method $\cF$ and a GCM $\cM$, the proposed community detection framework, called SGC-GEN, solves the following minimization problem
\begin{align}
\label{eqn_SGC_GEN}
\min_{\{\cG_k\}_{k=1}^K \in \cS}   f(\{\cG_k\}_{k=1}^K,\cF,\cM)+ \alpha \cdot R(\{\cG_k\}_{k=1}^K,\cM), 
\end{align}
where $\cS=\{\{\cG_k\}_{k=1}^K: K=2,\ldots,\Kmax \}$ denotes the set of candidate community detection results of different number of communities obtained by $\cF$.
Using terminology from supervised learning theory,  $f$ is analog to the loss function, $R$ resembles the regularization function, and $\alpha \geq 0$ is the regularization parameter.

Many existing community detection methods  can fit into the framework of SGC-GEN in (\ref{eqn_SGC_GEN}). For example, objective-function-based algorithms specify a particular energy function $f$ for quality assessment and set $R=0$ \cite{zelnik2004self}.  Greedy  algorithms specify a model mismatch metric $R$ and set $f=0$ and $\alpha=1$. For example, the Louvain method \cite{blondel2008fast} selects $R$ to be the negative modularity, where modularity is a measure of relative difference between the detected communities and the corresponding configuration model \cite{Newman06PNAS}.

\section{Theoretical Foundation of SGC-GEN: Normalized Graph Laplacian Matrix and Stochastic Block Model}
\label{sec_SGC_GEN_thm}
In this section we study the community detectability of SGC using the normalized graph Laplacian matrix under a stochastic block model (SBM).
We establish a sufficient and necessary condition such that SGC is guaranteed to yield reliable community detection results for graphs generated by a SBM. The established condition will be used in Sec. \ref{sec_SGC_GEN_algo} to devise a novel data-driven community detection loss function for the proposed  SGC-GEN framework in (\ref{eqn_SGC_GEN}). For demonstration, we also provide a case study of the established condition under a simplified SBM. The proofs of the established theories are given in the supplementary material\footnote{Supplementary material can be downloaded from  \href{https://sites.google.com/site/pinyuchenpage/}{www.pinyuchen.com}}.

\subsection{Normalized Graph Laplacian Matrix and Stochastic Block Model (SBM)}
\label{subsec_NGL}
\textbf{SGC using normalized graph Laplacian matrix.}
Let $\bA$ denote the $n \times n$ adjacency matrix of $\cG$ and let $\bD$ be the corresponding diagonal degree matrix. The unnormalized graph Laplacian matrix is defined as $\bL=\bD-\bA$. The normalized graph Laplacian matrix is defined as $\bLN=\bD^{-\frac{1}{2}} \bL \bD^{-\frac{1}{2}}$. We denote the $k$-th smallest eigenpair of $\bLN$ by $(\lambda_k,\by_k)$, where $\by_k$ is the eigenvector associated with the eigenvalue $\lambda_k$, and $\lambda_k \leq \lambda_{k+1}$. It is also known that $\lambda_1=0$ \cite{Luxburg07}.  The standard SGC algorithm using $\bLN$ \cite{ng2002spectral} is summarized in Algorithm \ref{algo_SGC}.

Let $\btY=[\by_1 \ldots \by_K]$ be the matrix of eigenvectors $\{\by_k\}_{k=1}^K$. 
The matrix $\btY$ is the solution of the minimization problem
\begin{align}
\label{eqn_SGC_1}
\min_{\bX \in \bbR^{n \times K},~\bX^T \bX = \bI_{K}} \trace(\bX^T \bLN \bX),
\end{align}
where $\bI_K$ is the $K \times K$ identity matrix, and the constraint $\bX^T \bX = \bI_{K}$ imposes orthogonality and unit norm for the columns in $\bX$.
If $\cG$ is a connected graph, then by the definition of $\bLN$, we have $\by_1=\bD^{\frac{1}{2}} \frac{\bone_n}{\sqrt{n}}$. Let $\bY= [\by_2~\ldots~\by_K]$ be the matrix after removing the first column $\by_1$ from $\btY$. Then (\ref{eqn_SGC_1}) can be reformulated as 
\begin{align}
\label{eqn_SGC_2}
\min_{\bX \in \bbR^{n \times (K-1)},~\bX^T \bX = \bI_{K-1},~\bX^T \bD^{\frac{1}{2}} \bone_n=\bzero_{K-1} } \trace(\bX^T \bLN \bX),
\end{align}
where $\bone_n~(\bzero_n)$ is the vector of 1's (0's) and $\bY$ is the solution to (\ref{eqn_SGC_2}).
The minimization problem in (\ref{eqn_SGC_2}) is a standard formulation of SGC based on the normalized graph Laplacian matrix \cite{Shi00,ng2002spectral,Luxburg07}, which is also a fundamental  element of many state-of-the-art SGC methods \cite{liu2013large,nie2014clustering,li2016scalable,nie2016constrained}, and it  will be the foundation of the theoretical results presented in Sec. \ref{subsec_thm}.

\textbf{Stochastic block model (SBM).}
SBM \cite{Holland83} is a fundamental GCM, and it has been the root of many other GCMs such as the degree-corrected SBM \cite{Karrer11} and the random interconnection model \cite{CPY16AMOS}. SBM is a parametric GCM that assumes common edge connection probability for within-community and between-community edges.
A graph $\cG$ of $K$ communities can be generated by a SBM as follows. The SBM first divides the $n$ nodes into $K$ groups, where each group has $n_k$ nodes such that $\sum_{k=1}^n n_k = n$. For each unordered node pair $(i,j)$, $i \neq j$, an edge between $i$ and $j$  is connected with probability $P_{g_i g_j}$, where $g_i,g_j \in \{1,\ldots,K\}$ denote the community labels of $i$ and $j$. Therefore, the SBM is parameterized by the number of communities $K$ and the $K \times K$ edge connection probability matrix $\bP$, where $[\bP]_{k \ell}= P_{k \ell}$ and $\bP$ is symmetric. We denote the SBM with parameters $K$ and $\bP$ by SBM($K$,$\bP$).

\begin{algorithm}[t]
	\caption{Standard SGC using  $\bLN$ \cite{ng2002spectral} }
	\label{algo_SGC}
	\begin{algorithmic}
		\State 	\textbf{Input:} graph $\cG$, number of
		communities $K$
		\State \textbf{Output:} $K$ communities $\{\cG_k\}_{k=1}^K$	
		\State 1. Obtain $\bLN=\bD^{-\frac{1}{2}} \bL \bD^{-\frac{1}{2}}$
		\State 2. Compute $\btY=[\by_1~\by_2~\ldots~\by_K]$
		\State 3. Row normalization: $[\bhY]_{ij}=[\btY]_{ij}/\sqrt{\sum_{k=1}^K [\btY]_{ik}^2}$,  $\forall~i,j$ 		 
		\State 4. K-means clustering on the rows of $\bhY$ and output $\{\cG_k\}_{k=1}^K$			
	\end{algorithmic}
\end{algorithm}

\subsection{Theoretical Guarantees on Community Detectability}
\label{subsec_thm}
Here we analyze the performance of community detection on graphs generated by SBM($K$,$\bP$) 
using  $\bLN$. In particular, we establish a sufficient and necessary condition on correct community detection, where correct community detection means the detected communities using $\bLN$ match the oracle communities generated by SBM($K$,$\bP$), up to some permutation in community labels. 
 The condition of community detectability leads to a novel community detection loss function as will be discussed in Sec. \ref{sec_SGC_GEN_algo}.
Let $\nmin=\min_{k}{n_k}$, $\nmax=\max_{k}{n_k}$, and let $\rho_k$  denote the limit value of $\frac{n_k}{n}$  as $n_k \ra \infty$.
The following lemma serves as a cornerstone that connects the dots between  $\bLN$ and SBM($K$,$\bP$).

\begin{lemma}(matrix concentration under SBM($K$,$\bP$)) \\	
	\label{lemma_SBM_concentration}	
	Let $\bA_{ij} \in \bbR^{n_i \times n_j}$ denote the adjacency matrix of edges between communities $\cG_i$ and $\cG_j$ of a graph generated by SBM($K$,$\bP$), $i,j \in \{1,\ldots,K\}$. The following holds almost surely as $n_k \ra \infty$, $\forall~k \in \{1,\ldots,K\}$ and $\frac{\nmin}{\nmax} \ra c > 0$:
	\begin{center}
		$\frac{\bA_{ij}}{n}	\ra \sqrt{\rho_i \rho_j} P_{ij} \frac{\bone_{n_1}}{\sqrt{n_1}} \frac{\bone_{n_2}^T}{n_2}.$ 
	\end{center}
\end{lemma}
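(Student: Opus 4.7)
The plan is to decompose the block into its expectation and a centered fluctuation, dispatch the expectation by a direct computation, and control the fluctuation by a matrix concentration inequality. Concretely, write $\bA_{ij}=\bM_{ij}+\bZ_{ij}$ with $\bM_{ij}=\bbE[\bA_{ij}]$ and $\bZ_{ij}=\bA_{ij}-\bM_{ij}$. For off-diagonal blocks ($i\neq j$) the entries of $\bA_{ij}$ are independent Bernoulli$(P_{ij})$, so $\bM_{ij}=P_{ij}\bone_{n_i}\bone_{n_j}^T$. For the diagonal block ($i=j$), the graph is simple, giving $\bM_{ii}=P_{ii}(\bone_{n_i}\bone_{n_i}^T-\bI_{n_i})$; the subtracted identity contributes spectral norm at most $1$, hence $O(1/n)$ after normalization, which is negligible.

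The deterministic piece converges by arithmetic:
\[
\frac{\bM_{ij}}{n}=P_{ij}\,\frac{\sqrt{n_in_j}}{n}\cdot\frac{\bone_{n_i}}{\sqrt{n_i}}\frac{\bone_{n_j}^T}{\sqrt{n_j}},
\]
and since $\frac{\bone_{n_i}}{\sqrt{n_i}}\frac{\bone_{n_j}^T}{\sqrt{n_j}}$ is a unit-norm rank-one matrix at each scale, spectral-norm convergence reduces to the scalar convergence $\sqrt{n_in_j}/n\ra\sqrt{\rho_i\rho_j}$, which is exactly the definition of $\rho_i$ and $\rho_j$.

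The main work lies in showing $\|\bZ_{ij}\|_2/n\asconv 0$. The entries of $\bZ_{ij}$ are centered random variables, bounded by $1$, with variance at most $1/4$, and independent up to the symmetry pairing when $i=j$. A standard spectral-norm concentration bound for such rectangular (resp.\ symmetric Wigner-type) random matrices---e.g.\ matrix Bernstein, Latala, or Bandeira--Van Handel / F\"uredi--Koml\'os---yields
\[
\Pr\!\left(\|\bZ_{ij}\|_2\geq C\sqrt{n_i+n_j}+t\right)\leq 2(n_i+n_j)\exp(-ct^2)
\]
for universal constants $C,c>0$. Taking $t=\sqrt{n}\log n$ and invoking the growth assumption $\nmin/\nmax\ra c>0$, which forces $n_i,n_j=\Theta(n)$, makes these tail probabilities summable over $n$. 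Borel--Cantelli then gives $\|\bZ_{ij}\|_2/n=O(\log n/\sqrt{n})\asconv 0$, and the triangle inequality combines the two pieces to conclude.

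The principal obstacle is this last almost-sure noise bound, in particular the diagonal block $i=j$, which is not a matrix of independent entries and so needs the symmetric random-matrix version of the concentration estimate rather than the simpler i.i.d.\ entry version. Both are now textbook tools, and once that bound is in hand the rest is bookkeeping. The hypothesis $\nmin/\nmax\ra c>0$ is used precisely at this step: it guarantees that $n$, $n_i$ and $n_j$ are all comparable, so the $1/n$ normalization kills the $O(\sqrt{n})$ noise while keeping the rank-one mean limit nondegenerate.
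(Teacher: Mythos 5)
Your proposal is correct and follows essentially the same route as the paper: decompose each block into its rank-one mean $P_{ij}\bone_{n_i}\bone_{n_j}^T$ plus a centered fluctuation, bound the fluctuation's spectral norm at order $o(n)$ via a concentration inequality (the paper uses Latala's theorem for the expectation plus Talagrand's inequality for the deviation), conclude almost-sure convergence by Borel--Cantelli, and treat the diagonal block separately via its symmetric structure (the paper splits $\bA_{ii}$ into upper- and lower-triangular halves, you invoke the Wigner-type version of the bound --- same device). The only cosmetic difference is that you finish with a direct triangle inequality on the matrices, whereas the paper passes through convergence of the individual singular values; your route is, if anything, slightly cleaner.
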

\begin{proof}	
	We use the Latala's theorem \cite{Latala05} and the Talagrand's concentration inequality \cite{Talagrand95} to prove this lemma.  The details are given in Appendix A of the supplementary material\footnotemark[1].
\end{proof}
The matrix concentration result in Lemma \ref{lemma_SBM_concentration} shows that the scaled adjacency matrix $\frac{\bA_{ij}}{n}$ converges asymptotically to a constant matrix of finite spectral norm $\sqrt{\rho_i \rho_j}P_{ij}$, which associates with the relative community size $\rho_k$ and the edge connection probability $P_{ij}$
under SBM($K$,$\bP$). The condition $c>0$ guarantees that all community sizes grow at a comparable rate.
Note that Lemma \ref{lemma_SBM_concentration} presumes each entry $P_{ij}$ in $\bP$ is a constant. In case of sparse graphs where $P_{ij}=\frac{a}{n}$ or $P_{ij}=\frac{b \log n}{n}$ for some positive constants $a,b$, similar matrix concentration result holds with high probability under mild conditions via  degree regularization techniques \cite{le2015concentration,joseph2016impact}.

Since Algorithm \ref{algo_SGC} is invariant to the permutation of node indices, for the purpose of analysis we treat the adjacency matrix $\bA$ as a matrix of $K \times K$ blocks $\{\bA_{ij}\}_{i,j=1}^K$. Using Lemma \ref{lemma_SBM_concentration}, we establish a sufficient and necessary condition on correct community detection using $\bLN$ for graphs generated by SBM($K$,$\bP$).

\begin{theorem}(community detectability using $\bLN$ under SBM($K$,$\bP$)) \\
	\label{thm_NGL_SBM}
	For any graph $\cG$ generated by SBM($K$,$\bP$),	let $\theta=\sum_{k=2}^K 1-\lambda_k$ and $\bY=[\by_2~\by_3~\ldots~\by_K]$, where $(\lambda_k,\by_k)$ is the $k$-th smallest eigenpair of $\bLN$. The following holds almost surely as $n_k \ra \infty$, $\forall~k \in \{1,\ldots,K\}$ and $\frac{\nmin}{\nmax} \ra c > 0$:
	\begin{center}
		The $K$ communities in $\cG$ can be correctly detected  \\
		using $\bY$ if and only if~$\theta>0$.
	\end{center}
\end{theorem}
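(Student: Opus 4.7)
The plan is to translate the statement into a spectral analysis of $\bW := \bI - \bLN = \bD^{-\frac{1}{2}}\bA\bD^{-\frac{1}{2}}$, which shares eigenvectors with $\bLN$ and whose eigenvalues are $\mu_k = 1 - \lambda_k$. In these terms, $\theta = \sum_{k=2}^K \mu_k$ is the sum of the $K-1$ largest nontrivial eigenvalues of $\bW$, and the claim becomes that the eigenvectors $\by_2,\ldots,\by_K$ selected in Algorithm~\ref{algo_SGC} recover the community-informative subspace iff $\theta > 0$.

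First, I would apply Lemma~\ref{lemma_SBM_concentration} blockwise to every $K\times K$ block of $\bA$ to obtain $\bA/n \to \bA^*$ in spectral norm, where $\bA^*$ is a deterministic rank-$K$ matrix that is constant within each block. Summing rows yields $\bD/n \to \bD^*$, a strictly positive diagonal matrix with entries $d_i^* = \sum_{j=1}^K \rho_j P_{g_i j}$ that are constant on each community, so $\bD^{*-\frac{1}{2}}$ is well-defined and $\bW \to \bW^* := \bD^{*-\frac{1}{2}}\bA^*\bD^{*-\frac{1}{2}}$ in spectral norm. The limit $\bW^*$ inherits the rank-$K$ block-constant structure: its range is spanned by $\{\bD^{*\frac{1}{2}}\bone_{\cG_k}\}_{k=1}^K$, so $\bW^*$ has exactly $K$ nonzero eigenvalues $1 = \mu_1^* \geq \mu_2^* \geq \cdots \geq \mu_K^*$ with piecewise-constant eigenvectors on the community partition, and $n-K$ zero eigenvalues.

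Next, by Weyl's inequality and a Davis--Kahan-type argument applied to the vanishing perturbation $\bW - \bW^*$, the top $K$ eigenvalues of $\bW$ converge almost surely to those of $\bW^*$ and the associated eigenvectors align (up to sign) with the block-constant eigenvectors of $\bW^*$. Algorithm~\ref{algo_SGC} picks $\by_2, \ldots, \by_K$ as the eigenvectors of the $K-1$ largest nontrivial eigenvalues of $\bW$; these coincide with the community-informative eigenvectors of $\bW^*$ precisely when $\mu_2^*, \ldots, \mu_K^*$ dominate the null eigenvalue $0$. In that case the rows of the row-normalized $\bhY$ asymptotically take $K$ distinct values, one per community, and the K-means step recovers the oracle partition; otherwise at least one column of $\bY$ is drawn from the uninformative null subspace of $\bW^*$ and K-means fails to resolve all $K$ communities. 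The final piece is to verify this selection condition coincides with $\theta > 0$, which I would do by reducing the nontrivial part of $\mathrm{spec}(\bW^*)$ to a $K \times K$ auxiliary matrix built from $\bP$, $\{\rho_k\}$, and $\bD^*$: a natural candidate is the row-stochastic $\widetilde{\bB}_{ab} = \rho_b P_{ab}/d_a^*$, whose Perron eigenvalue is $1$ and whose remaining spectrum is exactly $\mu_2^*, \ldots, \mu_K^*$.

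The main obstacle is this last reduction---matching the summation condition $\theta > 0$ with the per-eigenvalue positivity $\min_{k\geq 2} \mu_k^* > 0$ that my spectral argument identifies as the natural requirement for correct recovery. Bridging this gap will likely require a Perron--Frobenius-style sign constraint on $\widetilde{\bB}$ specific to SBM$(K,\bP)$ that ensures the sum of its nontrivial eigenvalues carries the individual sign information needed to certify that every selected eigenvector in $\bY$ is block-constant on the community partition.
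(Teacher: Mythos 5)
Your setup is sound and genuinely different from the paper's: you perturb the limiting rank-$K$ operator $\bW^* = \bD^{*-\frac{1}{2}}\bA^*\bD^{*-\frac{1}{2}}$ directly (Weyl plus Davis--Kahan), whereas the paper never invokes eigenvector perturbation theory at all. Instead it writes $\bY$ as the solution of the constrained trace minimization in (\ref{eqn_SGC_2}), derives the KKT optimality condition $\bLN\bY = \bY\bU$ with $\bU = \diag(\lambda_2,\ldots,\lambda_K)$, substitutes the blockwise limits from Lemma~\ref{lemma_SBM_concentration} into that condition, and then classifies its asymptotic solutions. Both routes rest on the same concentration lemma, and your identification of the block-constant informative subspace of $\bW^*$ matches the paper's detectable-regime conclusion $\bY_k \to \frac{\bone_{n_k}}{\sqrt{n_k}}\bb_k^T$.

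However, the step you flag as "the main obstacle" is not a technicality to be patched later --- it is the entire content of the theorem, and your proposal does not close it. Your perturbation argument naturally certifies recovery under the per-eigenvalue condition $\min_{k\ge 2}\mu_k^* > 0$ (every selected eigenvector lies in the informative subspace), while the theorem asserts the \emph{sum} condition $\theta = \sum_{k\ge 2}(1-\lambda_k) > 0$. These do not coincide by any generic spectral fact: a sum of reals can be positive while individual terms vanish or go negative, and nothing in Weyl or Davis--Kahan converts one into the other. The paper bridges this gap by a structural dichotomy that your approach would still need to prove: it shows the asymptotic optimality condition (\ref{eqn_opt_SBM_1}) admits only two solution families --- either $\frac{\bY_k^T\bone_{n_k}}{\sqrt{n_k}} \to \bzero_{K-1}$ for \emph{every} $k$, which is shown to be equivalent to $\bU \to \bI_{K-1}$, i.e.\ \emph{all} of $\lambda_2,\ldots,\lambda_K$ converge to $1$ and $\theta \to 0$; or $\bI_{K-1}-\bU$ is invertible, in which case \emph{every} block $\bY_k$ becomes row-constant with distinct, sign-alternating representatives and $\theta > 0$. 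Because the degeneration is all-or-nothing at the level of the whole matrix $\bU$ (combined with the fact that the selected eigenvalues satisfy $\lambda_k \le 1 + o(1)$, so each summand $1-\lambda_k$ is asymptotically nonnegative), the sum $\theta$ carries exactly the same information as the subspace condition. To complete your route you would have to rule out the "partially informative" configuration in which some columns of $\bY$ come from the informative subspace and others from the null bulk of $\bW^*$ --- the Perron--Frobenius sign constraint on your auxiliary matrix $\widetilde{\bB}$ that you gesture at is not by itself enough to do this. A secondary, smaller issue: your claim that $\bW^*$ has exactly $K$ nonzero eigenvalues requires $\bP$ to be nonsingular; if $\bP$ is rank-deficient the informative subspace has dimension less than $K$ and your selection argument needs a separate case.
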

\begin{proof}[Proof]
	We provide a sketch of the proof below. The complete proof is given in Appendix B of the supplementary material\footnotemark[1]. \\
	\textbf{Step 1.} Specify the optimality condition of $\bY$ using (\ref{eqn_SGC_2}).
	\\
	\textbf{Step 2.} Show the distribution of the rows in $\bY$ can be separated into two regimes, detectable or undetectable, using Lemma \ref{lemma_SBM_concentration}.
	\\
	\textbf{Step 3.} If $\bY$ is in the undetectable regime, show  the distribution of the rows in $\bY$ is inconsistent with the community structure.
	\\
	\textbf{Step 4.} If $\bY$ is in the detectable regime, show   the distribution of the rows in $\bY$ is consistent with the community structure.
	\\ 
	\textbf{Step 5.} Show $\bY$ is in the detectable regime iff $\theta>0$.
\end{proof}
Note that Theorem \ref{thm_NGL_SBM} provides a novel data-driven criterion for 
evaluating the quality of communities without the knowledge of the parameters $\bP$ in SBM($K,\bP$). In other words, for any graph generated by SBM($K,\bP$), for evaluating community detectability it suffices to compute the $K-1$ smallest nonzero eigenvalues $\{\lambda_k\}_{k=2}^K$ of $\bLN$
and inspect the condition $\theta>0$, which will be further explored in Sec. \ref{sec_SGC_GEN_algo}.
In addition, Theorem \ref{thm_NGL_SBM} also implies the feasibility of community detection using Algorithm \ref{algo_SGC}, since $\btY=[\by_1~\bY]$ and row normalization does not alter the sign of each entry in $\bhY$.

\subsection{Case Study: SBM($2,\bP$)}
To investigate the implication of the sufficient and necessary condition for correct community detection in Theorem \ref{thm_NGL_SBM},
we study  SBM($2,\bP$), the case of SBM with two communities, and justify the condition via numerical experiments. 
Under SBM($2,\bP$), we allow the size of the two communities, $n_1$ and $n_2$, to be arbitrary as long as their limit values $\rho_1, \rho_2>0$. We also simplify the notation of the edge connection matrix $\bP$ by defining $P_{11}=p_1$, $P_{22}=p_2$, and $P_{12}=P_{21}=q$. The following corollary specifies the condition of community detectability in terms of $p_1$, $p_2$ and $q$.
\begin{corollary}(community detectability using $\bLN$ under SBM($2,\bP$))
	\label{cor_SBM_two}
	For any graph $\cG$ generated by SBM($2$,$\bP$),	let  $\by_2=[\byt_1^T~\byt_2^T]^T$ denote the second smallest eigenvector of $\bLN$, where $\byt_k$, $k=1,2$, is the community-indexed block vector of $\by_2$.   The following holds almost surely as $n_1,n_2 \ra \infty$ and $\frac{\nmin}{\nmax} \ra c > 0$:
	\begin{center}
		The two communities in $\cG$ can be correctly detected  \\
		using $\bY$ if and only if~$ q < \sqrt{p_1 p_2}$.	
	\end{center}	
	Furthermore, $\byt_1 \ra \pm \beta_1 \frac{\bone_{n_1}}{\sqrt{n_1}}$ and  $\byt_2 \ra \mp \beta_2 \frac{\bone_{n_2}}{\sqrt{n_2}}$ for some $\beta_1, \beta_2 >0$ if and only if~$ q < \sqrt{p_1 p_2}$.	
\end{corollary}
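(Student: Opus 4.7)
The plan is to specialize Theorem~\ref{thm_NGL_SBM} to $K=2$ and translate the abstract criterion $\theta = 1 - \lambda_2 > 0$ into an explicit inequality on $p_1, p_2, q$. First I would invoke Lemma~\ref{lemma_SBM_concentration} with $K=2$, coupled with concentration of degrees (nodes in community $k$ have degree concentrating around $d_k$ with $d_1 = p_1 n_1 + q n_2$ and $d_2 = q n_1 + p_2 n_2$), to identify the spectral-norm limit of $\bD^{-\frac{1}{2}} \bA \bD^{-\frac{1}{2}}$. The key observation is that this limit is a rank-two operator whose range lies in the two-dimensional subspace $\textnormal{span}(\bv_1,\bv_2)$, where $\bv_k$ is $\bone_{n_k}/\sqrt{n_k}$ padded with zeros outside community $k$.

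Restricted to $\textnormal{span}(\bv_1,\bv_2)$, the limit operator is represented by the $2\times 2$ symmetric matrix
\[
\bH = \begin{pmatrix} p_1 n_1 / d_1 & q\sqrt{n_1 n_2}/\sqrt{d_1 d_2} \\ q\sqrt{n_1 n_2}/\sqrt{d_1 d_2} & p_2 n_2 / d_2 \end{pmatrix}.
\]
The Perron direction $\bD^{\frac{1}{2}}\bone_n/\sqrt{n}$ (eigenvalue $1$ of $\bD^{-\frac{1}{2}}\bA\bD^{-\frac{1}{2}}$) lies in $\textnormal{span}(\bv_1,\bv_2)$, so $\bH$ always has $1$ as one eigenvalue and the other eigenvalue equals $\det(\bH) = n_1 n_2 (p_1 p_2 - q^2)/(d_1 d_2)$. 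Since the remaining $n-2$ eigenvalues of the limit matrix are $0$, the second-largest eigenvalue of $\bD^{-\frac{1}{2}}\bA\bD^{-\frac{1}{2}}$ in the limit equals $\max(0, \det(\bH))$, which is strictly positive iff $q < \sqrt{p_1 p_2}$. Equivalently $\theta > 0$ iff $q < \sqrt{p_1 p_2}$, and combining this with Theorem~\ref{thm_NGL_SBM} establishes the first claim.

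For the second claim, in the detectable regime $\det(\bH) > 0$ is an isolated eigenvalue of the limit operator, so a Davis--Kahan argument yields that $\by_2$ converges in direction to the unique unit vector in $\textnormal{span}(\bv_1,\bv_2)$ orthogonal to the Perron direction. Since the Perron direction has coordinates proportional to $(\sqrt{d_1 n_1}, \sqrt{d_2 n_2})$ in the $\{\bv_1,\bv_2\}$ basis, the orthogonal direction is proportional to $(\sqrt{d_2 n_2}, -\sqrt{d_1 n_1})$; mapping back to $\bbR^n$ produces the block-constant form $\byt_1 \to \pm \beta_1 \bone_{n_1}/\sqrt{n_1}$ and $\byt_2 \to \mp \beta_2 \bone_{n_2}/\sqrt{n_2}$ with $\beta_k > 0$ given by explicit positive ratios of $d_1 n_1$ and $d_2 n_2$. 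Conversely, when $q \geq \sqrt{p_1 p_2}$, the second-largest eigenvalue of the limit operator is $0$, so $\by_2$ is orthogonal to $\textnormal{span}(\bv_1,\bv_2)$ in the limit; in particular its projection onto $\bone_{n_k}$ must vanish, ruling out the block-constant form with strictly positive amplitudes.

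The main obstacle I anticipate is the rigorous passage from spectral-norm convergence of matrices provided by Lemma~\ref{lemma_SBM_concentration} to convergence of the specific eigenvector $\by_2$, rather than of eigenvalues alone. This requires a spectral gap that does not vanish: the distance between $\det(\bH)$ and the third-largest eigenvalue of $\bD^{-\frac{1}{2}}\bA\bD^{-\frac{1}{2}}$ must stay bounded away from zero. Since the limit operator has rank exactly two, the bulk eigenvalues all concentrate at $0$ and the relevant gap is exactly $|\det(\bH)|$, which is bounded away from zero whenever $q < \sqrt{p_1 p_2}$, validating the Davis--Kahan step.
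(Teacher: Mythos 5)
Your argument is correct in substance but takes a genuinely different route from the paper. The paper's proof of Corollary~\ref{cor_SBM_two} specializes the Lagrangian/KKT machinery from the proof of Theorem~\ref{thm_NGL_SBM}: it plugs $K=2$ into the asymptotic optimality condition for $\bY$, uses the constants $a_1,a_2$ and the constraint $\by_2^T\bD^{\frac{1}{2}}\bone_n=0$ to express $\bU=\lambda_2$ as $1$ minus a quantity proportional to $(p_1p_2-q^2)\,(\byt_1^T\bone_{n_1}/\sqrt{n_1})^2$, and reads off the sign; the eigenvector form then comes from the already-derived block-constant representation $\bY_k\ra\frac{\bone_{n_k}}{\sqrt{n_k}}\bb_k^T$ together with the unit-norm constraint. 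You instead diagonalize the limiting operator $\bD^{-\frac{1}{2}}\bA\bD^{-\frac{1}{2}}=\bI-\bLN$ directly: its limit is rank two with range $\textnormal{span}(\bv_1,\bv_2)$, the restriction $\bH$ has the Perron eigenvalue $1$ built in, so the second eigenvalue is simply $\det(\bH)=n_1n_2(p_1p_2-q^2)/(d_1d_2)$, and Davis--Kahan plus the orthogonality to the Perron direction gives the block-constant eigenvector with amplitudes $\propto(\sqrt{d_2n_2},-\sqrt{d_1n_1})$. Your route is more self-contained (it does not lean on the general-$K$ optimality conditions), yields an explicit closed form for the limit of $1-\lambda_2$ rather than only its sign, and makes the spectral-gap requirement for eigenvector convergence explicit --- you correctly note that the gap is exactly $|\det(\bH)|$ once degree concentration upgrades Lemma~\ref{lemma_SBM_concentration} to spectral-norm convergence of $\bD^{-\frac{1}{2}}\bA\bD^{-\frac{1}{2}}$, and you should also check $\det(\bH)<1$ (which holds for $q>0$) so that the eigenvalue is isolated from above as well. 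The one caveat, shared equally by the paper's own proof, is the critical case $q=\sqrt{p_1p_2}$: there $\det(\bH)=0$ merges into the bulk at $0$, and the kernel of the limit operator acquires a block-constant direction inside $\textnormal{span}(\bv_1,\bv_2)$, so the ``only if'' direction of the eigenvector claim is not fully pinned down at the boundary by either argument; away from the boundary both arguments are sound.
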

\begin{proof}
	The results are induced from the condition $\theta>0$ in Theorem \ref{thm_NGL_SBM} under  SBM($2$,$\bP$).
	The proof is given in Appendix C of the supplementary material\footnotemark[1].
\end{proof}
The established detectability condition in Corollary \ref{cor_SBM_two} is universal in the sense that it does not depend on the ratio $\frac{\nmin}{\nmax}$ of the community sizes as long as its limit value $c>0$. It is worth mentioning that the condition $ q < \sqrt{p_1 p_2}$ for correct community detection is also consistent with the condition  using methods other than $\bLN$, such as the spectral modularity matrix  \cite{CPY14modularity}, the spectrum of modular matrix \cite{Peixoto13}, and the inference-based method \cite{Zhao12}.
In addition, when $ q < \sqrt{p_1 p_2}$, the results that $\byt_1 \ra \pm \beta_1  \frac{\bone_{n_1}}{\sqrt{n_1}}$ and  $\byt_2 \ra \mp \beta_2  \frac{\bone_{n_2}}{\sqrt{n_2}}$ for some $\beta_1,\beta_2>0$  imply the nodes in the same community have identical yet community-wise distinct representation, as $\byt_1$ and $\byt_2$ are nonzero constant vectors with opposite signs. This  guarantees that K-means clustering on $\by$ leads to correct community detection when  $ q < \sqrt{p_1 p_2}$. In particular,  when $n_1=n_2$ and $p_1=p_2=p$, the parameters $p$ and $q$ reflect the expected number of within-community and between-community edges, respectively. The condition in Corollary \ref{cor_SBM_two} then reduces to $q<p$, which means the two communities can be correctly detected when  there are more  within-community edges than between-community edges.

\begin{figure}[t]
	\hspace{-3mm}
	\begin{subfigure}[b]{0.53\linewidth}
		\includegraphics[width=\textwidth]{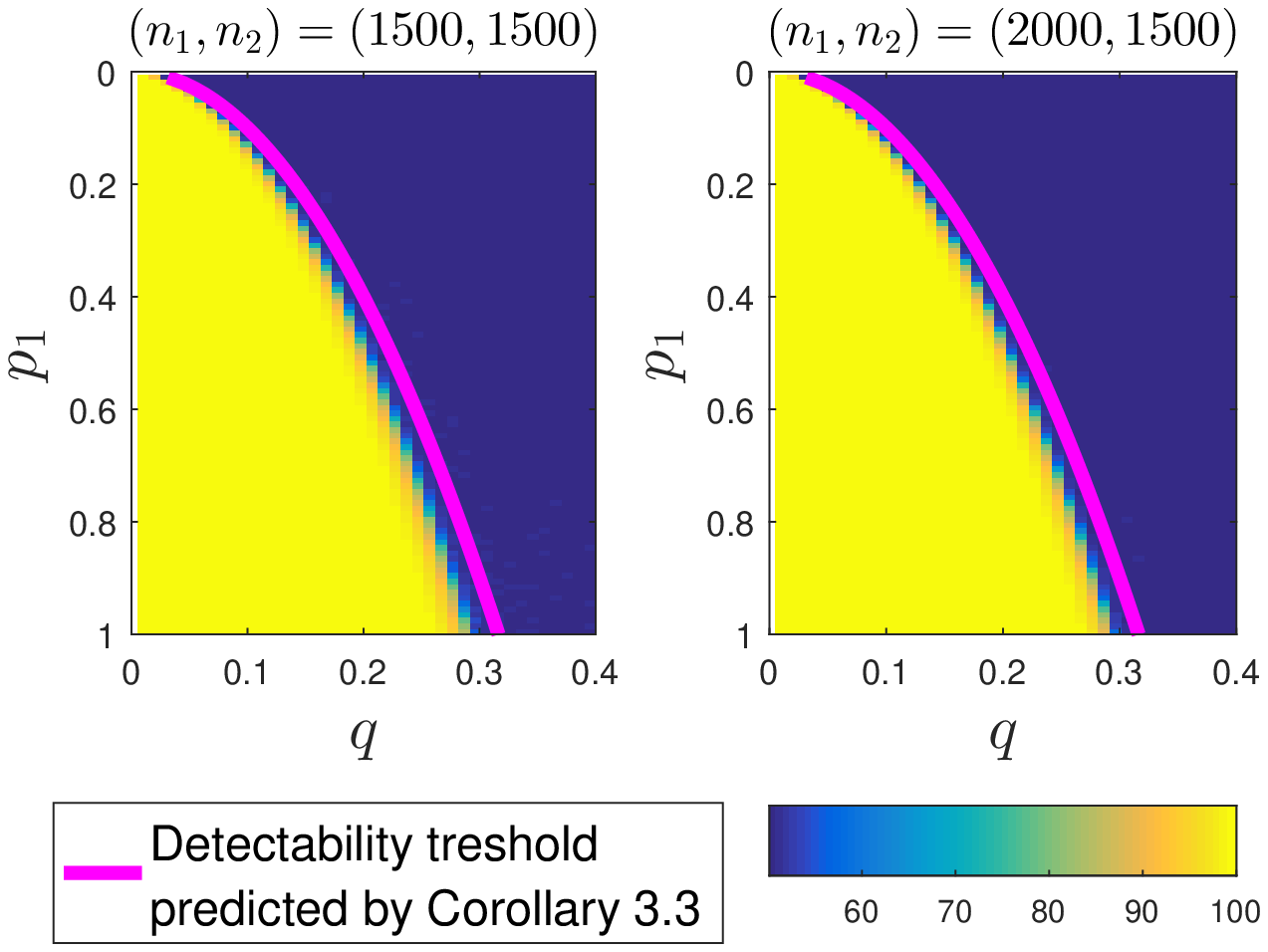}
		\vspace{-9mm}				
		\caption{Community detectability (\%).}
	\end{subfigure}%
	\begin{subfigure}[b]{0.53\linewidth}
		\includegraphics[width=\textwidth]{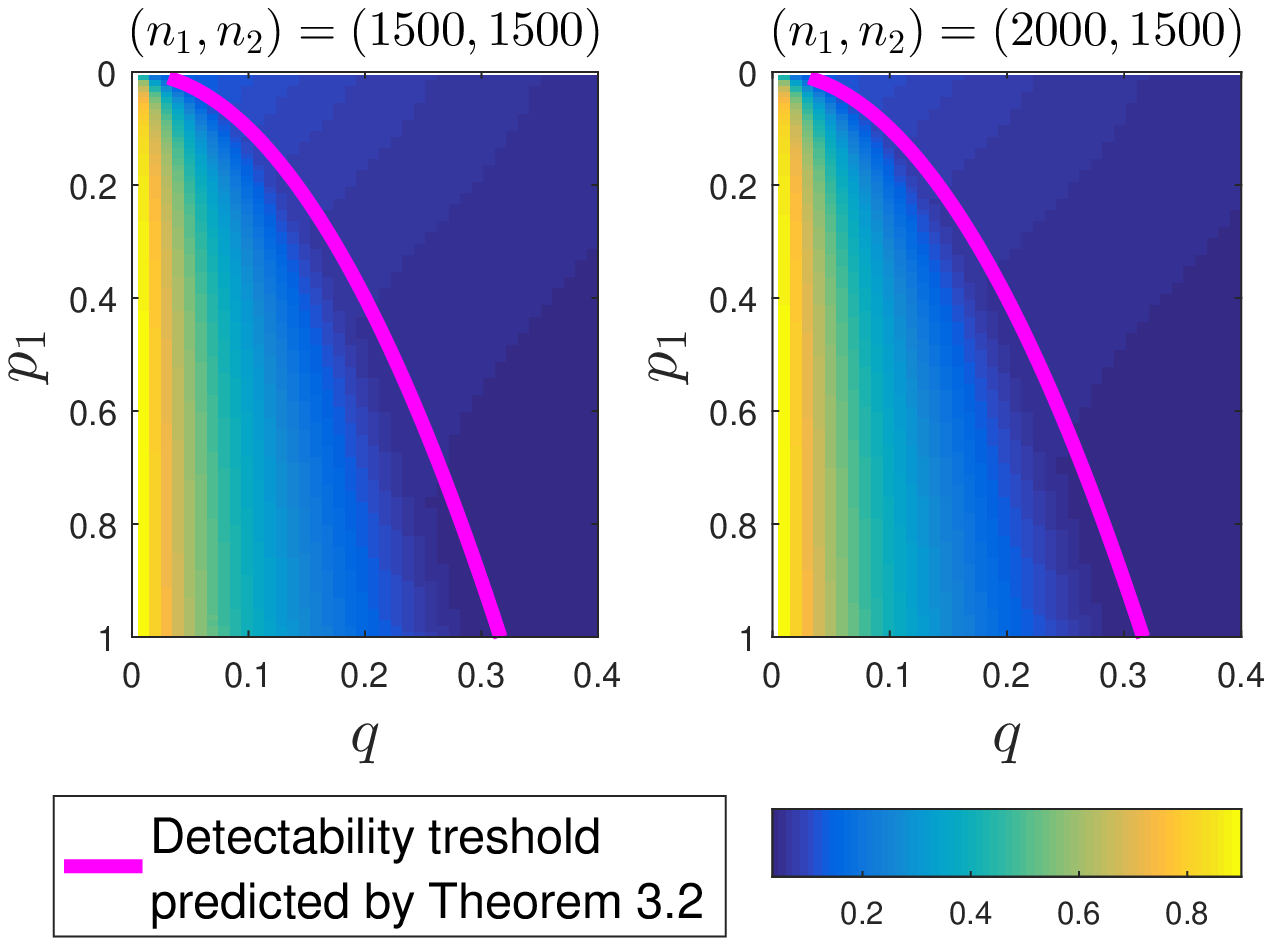}
		\vspace{-9mm}
		\caption{The value of $\theta$.}
	\end{subfigure}
	\vspace{-4mm}
	\caption{Numerical validation of community detectability via Algorithm \ref{algo_SGC} under SBM($2,\bP$) with varying $p_1$, $q$ and fixed $p_2=0.1$. Community detectability is defined as the fraction of correctly detected nodes. The pink curve specifies the theoretical detection threshold $q=\sqrt{p_1 p_2}$. When $q \geq \sqrt{p_1 p_2}$, correct community detection is impossible and $\theta$ is close to $0$, as indicated by 
		Corollary \ref{cor_SBM_two} (see (a)) and
		Theorem \ref{thm_NGL_SBM} (see (b)).}
	\label{Fig_SBM_2}
	\vspace{-4mm}
\end{figure}

Fig. \ref{Fig_SBM_2} displays two numerical examples of different community sizes to validate the detectability condition. It can be observed that in both cases when $q < \sqrt{p_1 p_2}$, correct community detection can be achieved and $\theta>0$. On the other hand, when $q \geq \sqrt{p_1 p_2}$, correct community detection is impossible and $\theta$ is close to $0$. Consequently, inspecting the data-driven parameter $\theta$ indeed reveals community detectability, which validates Theorem \ref{thm_NGL_SBM} and Corollary \ref{cor_SBM_two}.

\section{Community Detection Algorithms using SGC-GEN}
\label{sec_SGC_GEN_algo}

\subsection{SGC-GEN Meta Algorithm}
The proposed SGC-GEN framework in (\ref{eqn_SGC_GEN}) applies to any SGC method and any GCM. It is a meta algorithm that avails community detection by specifying the corresponding community detection loss function $f$ and the model mismatch metric $R$, in addition to the regularization parameter $\alpha$ and the maximum number of communities $\Kmax$. Algorithm \ref{algo_SGC_GEN} below summarizes SGC-GEN.

\begin{algorithm}
	\caption{SGC-GEN meta algorithm}
	\label{algo_SGC_GEN}
	\begin{algorithmic}
		\State 	\textbf{Input:}
		\begin{itemize}
			\item graph $\cG$
			\item spectral graph clustering (SGC) method $\cF$
			\item generative community model (GCM) $\cM$
			\item maximum number of communities $\Kmax$
			\item regularization parameter $\alpha$
			\item community detection loss function $f$	
			\item model mismatch metric $R$
		\end{itemize}
		\State  	\textbf{Output:} $K^*$ communities $\{\cG_k\}_{k=1}^{K^*}$ in $\cG$,~$2 \leq K^* \leq \Kmax$
		\State \textbf{Step 1:} Use $\cF$ to obtain the set $\cS$ of candidate community detection results.  $\cS=\{\{\cG_k\}_{k=1}^{K}: K=2,\ldots,\Kmax\}$
		\State \textbf{Step 2:} Find $K^*=\arg \min_{\{\cG_k\}_{k=1}^K \in \cS }   f(\{\cG_k\}_{k=1}^K,\cF,\cM)$ 
		\State$+ \alpha \cdot R(\{\cG_k\}_{k=1}^K,\cM)$
		\State \textbf{Step 3:} Output $\{\cG_k\}_{k=1}^{K^*}$
	\end{algorithmic}
\end{algorithm}

\begin{table*}[t]
	\centering
	\caption{Summary of 8 SGC-GEN-empowered methods (first 4 rows highlighted by brown color) and 6 comparative baseline approaches in Sec. \ref{subsec_baseline}.}
	\label{table_SGC_GEN}
	\begin{tabular}{|c|c|c|c|c|c|}
		\hline
		Method               & Algorithm         & GCM            & f   & R     & Computational complexity                                   \\ \hline
		\textcolor{brown}{SGC-EIG (regSGC-EIG)} & $\cF_1$ ($\cF_2$) & SBM($K,\bP$)   & (4) & $R_1$ & $O\lb \Kmax (m+m^{\prime \prime}) + (\Kmax^3+\Kmax) n \rb$ \\ \hline
			\textcolor{brown}{SGC-MOD (regSGC-MOD)} & $\cF_1$ ($\cF_2$) & SBM($K,\bP$)   & (4) & $R_2$ & $O\lb \Kmax m + (\Kmax^3+\Kmax) n \rb$                     \\ \hline
			\textcolor{brown}{SGC-AIC (regSGC-AIC)} & $\cF_1$ ($\cF_2$) &SBM($K,\bP$)   & (4) & $R_3$ & $O\lb \Kmax m + (\Kmax^3+\Kmax) n \rb$                     \\ \hline
			\textcolor{brown}{SGC-BIC (regSGC-BIC)} & $\cF_1$ ($\cF_2$) & SBM($K,\bP$)   & (4) & $R_4$ & $O\lb \Kmax m + (\Kmax^3+\Kmax) n \rb$                     \\ \hline
		SBM-AIC \cite{aicher2014learning}              & Bayesian inference        & SBM($K,\bP$)   & 0   & $R_3$ & $O\lb \Kmax m + (\Kmax^3+\Kmax) n \rb$   \\ \hline
		SBM-BIC \cite{aicher2014learning}                & Bayesian inference         & SBM($K,\bP$)   & 0   & $R_4$ & $O\lb \Kmax m + (\Kmax^3+\Kmax) n \rb$                     \\ \hline
		DCSBM-AIC  \cite{aicher2014learning}             & Bayesian inference       & DCSBM & 0   & $R_3$ & $O\lb \Kmax m + (\Kmax^3+\Kmax) n \rb$                     \\ \hline
		DCSBM-AIC   \cite{aicher2014learning}           & Bayesian inference         & DCSBM & 0   & $R_4$ & $O\lb \Kmax m + (\Kmax^3+\Kmax) n \rb$                     \\ \hline
		Self-Tuning  \cite{zelnik2004self}        & $\cF_1$            & None           &defined in \cite{zelnik2004self}  & 0     & $O\lb \Kmax m + (\Kmax^3+\Kmax) n \rb$                     \\ \hline
		Louvain \cite{blondel2008fast}                & Node merging      & None           & 0   & $R_2$ & $O\lb(m+n) \cdot \text{iterations} \rb$                                         \\ \hline
	\end{tabular}
\end{table*}

\subsection{SGC-GEN via $\bLN$ and SBM($K,\bP$) }
\label{subsec_SGC_GEN_eight}
Based on the theoretical analysis established in Sec. \ref{sec_SGC_GEN_thm}, here we specify 2 SGC methods, the corresponding community detection loss function, and 4 model mismatch metrics for SGC-GEN.  This yields 8 community detection methods originated from Algorithm \ref{algo_SGC_GEN}. In particular, for these methods we select the GCM $\cM$ to be SBM($K,\bP$). These SGC-GEN-empowered community detection methods are summarized in Table \ref{table_SGC_GEN}. The details are described as follows.\\
\textbf{Two SGC methods.}\\
$\bullet$ $\cF_1:$ the first method is SGC using the normalized graph Laplacian matrix $\bLN$ as described in Algorithm \ref{algo_SGC}. To obtain the set  $\cS$ in step 1 of Algorithm \ref{algo_SGC_GEN}, one computes the $\Kmax$ smallest eigenvectors of $\bLN$ and use  Algorithm \ref{algo_SGC} to obtain the candidate communities $\{\cG_k\}_{k=1}^K $ of different $K$ in $\cS$. \\
$\bullet$ $\cF_2:$ the second method is regularized SGC 	using the normalized graph Laplacian matrix $\bLN$. It is similar to $\cF_1$ except that one replaces the matrix $\bD$ in step 1 of Algorithm \ref{algo_SGC} with $\bD+ \overline{d} \bI_n$, where $\overline{d}$ is the average degree of the graph $\cG$. The regularization leads to better clustering than $\cF_1$ in sparse graphs as suggested in \cite{chaudhuri2012spectral,amini2013pseudo}.\\
\textbf{Community detection loss function.}\\
Since $\cF_1$ and $\cF_2$ are SGC methods via $\bLN$, using  Theorem \ref{thm_NGL_SBM},
we set the community detection loss function $f$ to be
\begin{align}
\label{eqn_comm_error}
f=\exp \lb - \frac{\theta}{K-1} \rb, 
\end{align}
where $\theta=\sum_{k=2}^K 1 - \lambda_k$ and $\lambda_k$ is the $k$-th smallest eigenvalue of $\bLN$. It is similar to the exponential loss function used in supervised learning problems. The denominator $K-1$ serves the purpose of comparing different community detection results in $\cS$.
When $\theta>0$, the function $f$ is confined in the interval $[0,1]$, and it favors the community detection results of small partial eigenvalue sum $\sum_{k=2}^K \lambda_k$, which is a measure of multiway cut in $\cG$ \cite{Luxburg07}.
When $\theta \leq 0$, $f$ is greater than 1 and has exponential growth as $\theta$ decreases, which implies that $f$ imposes large loss on incorrect community detection results  based on Theorem \ref{thm_NGL_SBM}. Note that $f$ is a data-driven function since it only requires the knowledge of $\{\lambda_k\}_{k=2}^K$.\\
\textbf{Four model mi$  $smatch metrics.}\\
$\bullet$ $R_1:$ spectral radius of the modular matrix  with respect to SBM($K,\bP$).  Define the $n \times n$ modular matrix $\bB$ with respect to SBM($K,\bP$) as 
$[\bB]_{ij}=
[\bA]_{ij} - \widehat{P}_{g_i g_j}$ if  $i \neq j$ and 	$[\bB]_{ij}=0$ if $i=j$,
for all $i,j \in \{1,\ldots,n\}$, where $g_i$ denotes the community label of node $i$. The parameter  $\widehat{P}_{g_i g_j}$ is the maximum likelihood estimator of $P_{g_i g_j}$ in $\bP$ given the detected communities $\{ \cG_{k}\}_{k=1}^K$, which is defined as
$\widehat{P}_{g_i g_j}=
\frac{m_{g_i g_j}}{ n_{g_i} n_{g_j} }$ if $g_i \neq g_j$ and 
$\widehat{P}_{g_i g_j}=\frac{m_{g_i g_i}}{ \binom{n_{g_i}}{2} }$ if $g_i=g_j$,	
for all $g_i,g_j \in \{1,\ldots,K\}$,
where $n_{k}$ denotes the number of nodes in $\cG_k$ and
$m_{k \ell}$ denotes the number of edges between communities $\cG_k$ and $\cG_\ell$.
$R_1$ is defined as the spectral radius of $\bB$, which is the largest eigenvalue of $\bB$ in absolute value. It relates to the first-order eigenvalue approximation of the signed triangle counts \cite{bubeck2016testing}, which is an effective statistic for testing latent structure in random graphs. \\
$\bullet$ $R_2:$ negative modularity. Given communities $\{ \cG_{k}\}_{k=1}^K$ in $\cG$, modularity is a measure of difference between $\{ \cG_{k}\}_{k=1}^K$ and a random graph of the same degree sequence \cite{Newman04mod}. The modularity is
defined as $Q=\sum_{k=1}^K (e_{kk} - b_k^2)$, where $e_{ij}=\frac{m_{ij}}{2m}$ if $i \neq j$ and $e_{ij}=\frac{m_{ii}}{m}$ if $i = j$, for all $i,j \in \{1,\ldots,K\}$, and $b_i=\sum_{j=1}^K e_{ij}$. By defining $R_2=-Q$, the model mismatch metric is small when the communities are distinct from the corresponding randomized graphs. \\
$\bullet$  $R_3:$ AIC under SBM($K,\bP$). The Akaike information criterion (AIC) is a measure of the relative quality of statistical models for a given set of data. $R_3$ is defined as the AIC given communities $\{ \cG_{k}\}_{k=1}^K$ under SBM($K,\bP$), which is $R_3=K(K-1) - 2 \phi(\{ \cG_{k}\}_{k=1}^K, SBM(K,\bP))$, where $\phi$ denotes the log-likelihood of $\{ \cG_{k}\}_{k=1}^K$ under SBM($K,\bP$). 
The closed-form expression of $\phi$ is given in \cite{Karrer11}. \\
$\bullet$  $R_4:$ BIC under SBM($K,\bP$). The  Bayesian information criterion (BIC) is another relative measure of data fitness to statistical models. $R_4$ is defined as the BIC of communities $\{ \cG_{k}\}_{k=1}^K$ under SBM($K,\bP$), which is  $R_4=\frac{\ln m}{2} \cdot K(K-1) - 2 \phi(\{ \cG_{k}\}_{k=1}^K, SBM(K,\bP))$.


\subsection{Computational Complexity Analysis}
\label{subsec_computation}
Here we analyze the computational complexity of the 8 SGC-GEN community detection methods listed in Table \ref{table_SGC_GEN}. 
There are three main factors contributing to the computational complexity: (i) computation of the $\Kmax$ smallest eigenvectors of $\bLN$, (ii) K-means clustering, and (iii) computation of the community detection loss function and the  model mismatch metric. The overall computational complexity of each method is summarized in Table \ref{table_SGC_GEN}.

For (i), computing the $\Kmax$ smallest eigenvectors of $\bLN$ requires $O(\Kmax(m+n))$ operations using power iteration techniques \cite{livne2012lean,CPY_16KDDMLG,wu2015preconditioned,wu2016estimating,wu2016primme_svds}, where $m+n$ is the number of nonzero entries in $\bLN$. For (ii), given any $K \leq \Kmax$, K-means clustering on the rows of the $K$ smallest eigenvectors of $\bLN$ requires $O(n K^2)$ operations \cite{zaki2014data}. As a result, to obtain the set $\cS$ of candidate community detection results by varying $K$ from $2$ to $\Kmax$ requires  $O(n \Kmax^3)$ operations in total. For (iii), the complexity of computing the function $\theta$ and the loss function $f$ in (\ref{eqn_comm_error}) is negligible since they can be obtained in the process of (i). The computation of $R_1$ for a given $K$ requires $O(m)$ operations for computing $\{\widehat{P}_{k \ell}\}_{k,\ell=1}^K$ and $O(m^\prime+n)$ operations for computing the spectral radius of $\bB$ using power iteration techniques, where $m^\prime$ is the number of nonzero entries in $\bB$. Therefore, the overall computational complexity of $R_1$ in SGC-GEN is $O(\Kmax(m^{\prime \prime}+m+n))$, where $m^{\prime \prime}$ is the maximum number of nonzero entries in $\bB$ ranging from $K=2$ to $K=\Kmax$. The computation of $R_2$ for a given $K$ is $O(m)$, the same complexity for computing modularity \cite{Newman04mod}. The overall computational complexity of $R_2$ in SGC-GEN is $O(\Kmax m)$. For a given $K$, the computation of $R_3$ and $R_4$ requires $O(m)$ operations to compute the closed-form log-likelihood function $\phi$. The overall computational complexity of $R_3$ and $R_4$ in SGC-GEN is $O(\Kmax m)$.  The computational complexity of $\cF_1$ and $\cF_2$ has the same order since the regularization step in $\cF_2$ simply adds $n$ entries to the degree matrix $\bD$. Similarly, the data storage of these methods require $O(\Kmax^2(m+n))$ space. 

\begin{table*}[]
	\centering
	\caption{Statistics and descriptions of the collected graph datasets. ``NA'' stands for ``not available''.}
	\label{table_dataset}
	\begin{tabular}{|c|c|c|c|c|c|c|}
		\hline
		Dataset           & Description           & Node          & Edge               & \# of nodes & \# of edges & Community labels    \\ \hline
		BlogCatalog\footnote{http://socialcomputing.asu.edu/datasets/BlogCatalog3}       & online social network & user          & friendship         & 10312       & 333983      & 39 social groups    \\ \hline
		Youtube\footnote{http://socialcomputing.asu.edu/datasets/YouTube2}           & online social network & user          & friendship         & 22180       & 96092       & 47 social groups    \\ \hline
		PoliticalBlog\footnote{http://konect.uni-koblenz.de/networks/moreno-blogs}     & online social network & user          & blog reference     & 1222        & 16714       & 2 political parties \\ \hline
		Cora\footnote{http://www.cs.umd.edu/~sen/lbc-proj/data/cora.tgz}              & publication network   & paper         & citation & 2485        & 5069        & 7 research topics   \\ \hline
		Citeseer\footnote{http://www.cs.umd.edu/~sen/lbc-proj/data/citeseer.tgz}          & publication network   & paper         & citation & 2110        & 3694        & 6 research topics   \\ \hline
		Pubmed\footnote{http://www.cs.umd.edu/projects/linqs/projects/lbc/Pubmed-Diabetes.tgz}            & publication network   & paper         & citation & 19717       & 44324       & 3 research topics   \\ \hline
		PrettyGoodPrivacy\footnote{http://konect.uni-koblenz.de/networks/arenas-pgp} & communication network & router        & connection         & 10680       & 24316       & NA                  \\ \hline
		AS-Newman\footnote{http://www-personal.umich.edu/~mejn/netdata/}         & communication network & router        & connection         & 22963       & 48436       & NA                  \\ \hline
		AS-SNAP\footnote{http://snap.stanford.edu/data/as.html}           & communication network & router        & connection         & 6474        & 12572       & NA                  \\ \hline
		Facebook\footnote{http://snap.stanford.edu/data/egonets-Facebook.html}          & online social network & user          & friendship         & 4039        & 88234       & NA                  \\ \hline
		Email-Arenas\footnote{http://konect.uni-koblenz.de/networks/arenas-email}      & email network         & user          & communication      & 1133        & 5451        & NA                  \\ \hline
		Email-Enron\footnote{http://snap.stanford.edu/data/email-Enron.html}       & email network         & user          & communication      & 33696       & 180811      & NA                  \\ \hline
		MinnesotaRoad\footnote{http://www.cise.ufl.edu/research/sparse/matrices/Gleich/minnesota.html}     & physical network      & intersection  & road               & 2640        & 3302        & NA                  \\ \hline
		PowerGrid\footnote{http://konect.uni-koblenz.de/networks/opsahl-powergrid}         & physical network      & power station & power line         & 4941        & 6594        & NA                  \\ \hline
		Reactome\footnote{http://konect.uni-koblenz.de/networks/reactome}          & biological network    & protein       & interaction        & 5973        & 146385      & NA                  \\ \hline
		CAAstroPh\footnote{http://snap.stanford.edu/data/ca-AstroPh.html}         & collaboration network & researcher    & coauthorship       & 17903       & 197000      & NA                  \\ \hline
		CAHepPh\footnote{http://snap.stanford.edu/data/ca-HepPh.html}           & collaboration network & researcher    & coauthorship       & 21363       & 91314       & NA                  \\ \hline
		CACondMat\footnote{http://snap.stanford.edu/data/ca-CondMat.html}         & collaboration network & researcher    & coauthorship       & 11204       & 117634      & NA                  \\ \hline
	\end{tabular}
\end{table*}

In summary, the overall computational complexity of SGC-GEN-enabled methods is linear in the number of nodes and edges ($n$ and $m$) and depends on $\Kmax$. In practice $\Kmax$ is a constant such that $\Kmax \ll$ $n$ and $m$. Based on the computational analysis, the community detection methods based on SGC-GEN have the same order of complexity in $n$ and $m$ when compared with the baseline methods of similar objective functions described in Sec. \ref{subsec_baseline}, which suggests that utilizing SGC-GEN for community detection is computationally as efficient as these baseline methods.

\section{Performance Evaluation}
\label{sec_performance}

\subsection{Dataset Description and Evaluation Metrics}
\label{subsec_data_des}
\textbf{Dataset Description.} To compare the performance of community detection, we collected 18 real-life graph datasets from various domains, including online social, physical, biological, communication, collaboration, email, and publication networks. For each dataset, we extracted the largest connected component as the input graph $\cG$ for community detection. All input graphs are made undirected, unweighted and unlabeled.
Among these datasets, 6 datasets are provided with additional community labels. If a node in the graph is provided with more than one community label, the most common label among its neighboring nodes is assigned to the node.
The statistics of the collected graphs are summarized in Table \ref{table_dataset}.

\textbf{Evaluation metrics.} We use 7 representative external and internal clustering metrics to evaluate the performance of different communication detection methods. External clustering metrics can  be computed when the community labels are given. Internal clustering metrics evaluate the quality of communities in terms of connectivity, which can be computed without community labels. \\
\textbf{external clustering metrics:}\\
$\bullet$ Normalized mutual information (NMI) \cite{zaki2014data}.	\\
$\bullet$ Rand index (RI) \cite{zaki2014data}.	\\
$\bullet$ F-measure (FM) \cite{zaki2014data}.  \\
These external clustering metrics are properly scaled between 0 and 1, and larger value means better clustering  performance. \\
\textbf{internal clustering metrics:} \\
$\bullet$ Conductance (COND) \cite{Shi00}: the averaged COND over all communities. Lower value means better performance. \\
$\bullet$  Normalized cut (NC) \cite{Shi00}: the averaged NC over all communities. Lower value means better performance. \\
$\bullet$ Average out-degree fraction (avg-ODF) \cite{flake2000efficient}: the averaged avg-ODF over all communities. Lower value means better performance.\\
$\bullet$ Modularity (MOD) \cite{Newman04mod}: MOD is defined in the model mismatch metric $R_2$ in Sec. \ref{subsec_SGC_GEN_eight}. Larger value means better  performance.

\footnotetext[2]{http://socialcomputing.asu.edu/datasets/BlogCatalog3} 
\footnotetext[3]{http://socialcomputing.asu.edu/datasets/YouTube2} 
\footnotetext[4]{http://konect.uni-koblenz.de/networks/moreno-blogs}
\footnotetext[5]{http://www.cs.umd.edu/~sen/lbc-proj/data/cora.tgz}
\footnotetext[6]{http://www.cs.umd.edu/~sen/lbc-proj/data/citeseer.tgz}
\footnotetext[7]{http://www.cs.umd.edu/projects/linqs/projects/lbc/Pubmed-Diabetes.tgz} \footnotetext[8]{http://konect.uni-koblenz.de/networks/arenas-pgp} 
\footnotetext[9]{http://www-personal.umich.edu/~mejn/netdata/}
\footnotetext[10]{http://snap.stanford.edu/data/as.html}
\footnotetext[11]{http://snap.stanford.edu/data/egonets-Facebook.html} 
\footnotetext[12]{http://konect.uni-koblenz.de/networks/arenas-email} 
\footnotetext[13]{http://snap.stanford.edu/data/email-Enron.html}
\footnotetext[14]{http://www.cise.ufl.edu/research/sparse/matrices/Gleich/minnesota.html}
\footnotetext[15]{http://konect.uni-koblenz.de/networks/opsahl-powergrid}
\footnotetext[16]{http://konect.uni-koblenz.de/networks/reactome} \footnotetext[17]{http://snap.stanford.edu/data/ca-AstroPh.html}
\footnotetext[18]{http://snap.stanford.edu/data/ca-HepPh.html}
\footnotetext[19]{http://snap.stanford.edu/data/ca-CondMat.html}

\textbf{Average rank score.}
To combine multiple clustering metrics for performance evaluation of different community detection methods, we adopt the methodology proposed in \cite{leskovec2010empirical,yang2015defining} and use the average rank score of all clustering metrics as the performance metric. For each dataset, we rank each community detection method for every clustering metric via standard competition rankings  and obtain an average rank score of all clustering metrics. Therefore,  lower average rank score means better  community detection.

\subsection{Baseline Comparative Methods}
\label{subsec_baseline}
As summarized in Table \ref{table_SGC_GEN},
we compare the performance of SGC-GEN methods with 6 baseline community detection methods of similar loss functions and model mismatch metrics: \\
$\bullet$ SBM-AIC: Given the number $K$ of communities, SBM-AIC uses Bayesian inference techniques to evaluate the posterior distribution of community assignments given the graph $\cG$ under the SBM. We implemented the state-of-the-art package WSBM\footnote{http://tuvalu.santafe.edu/~aaronc/wsbm/}  to obtain the mostly probable communities $\{\cG_k\}_{k=1}^K$ \cite{aicher2014learning} and use the AIC to determine the final  communities ranging from $K=2$ to $K=\Kmax$.\\
$\bullet$ SBM-BIC: SBM-BIC is the same as SBM-AIC except that one uses the BIC  to determine the final community detection results. \\
$\bullet$ DCSBM-AIC:  DCSBM-AIC is the same as SBM-AIC except that one uses the degree-corrected SBM (DCSBM) \cite{Karrer11} for inference. \\
$\bullet$ DCSBM-BIC: DCSBM-BIC is the same as SBM-BIC except that one uses the degree-corrected SBM (DCSBM) \cite{Karrer11} for inference. \\
$\bullet$ Self-Tuning\footnote{http://www.vision.caltech.edu/lihi/Demos/SelfTuningClustering.html}:  Self-Tuning is a SGC algorithm that uses an energy function based on  $\bLN$ for basis rotation and finds the best community detection results among $2$ to $\Kmax$ communities \cite{zelnik2004self}. \\
$\bullet$ Louvain\footnote{https://perso.uclouvain.be/vincent.blondel/research/louvain.html}: Louvain method is a greedy modularity maximization approach for community detection based on node merging \cite{blondel2008fast}.

\subsection{The Effect of Regularization Parameter $\alpha$}
Here we investigate the effect of the regularization parameter $\alpha$ in (\ref{eqn_SGC_GEN}) on the performance of the eight SGC-GEN community detection methods listed in Table \ref{table_SGC_GEN}.  We set $\Kmax=50$ and use the six datasets with additional community labels in Table \ref{table_dataset} to select $\alpha$ from the set $\{0,10^{-6},10^{-5},\ldots,10^2\}$. For illustration, Fig. \ref{Fig_rank_plot} displays the stacked average rank plot of these SGC-GEN methods separately ranked by different $\alpha$ in Youtube and Citeseer datasets. The colors represent different methods and the width of each colored block represents average rank score based on the selected values of $\alpha$.
It is observed that for each method, setting large $\alpha$ (i.e., underestimating the loss function) or neglecting the model mismatch metric (i.e., setting $\alpha=0$) leads to the worst performance, which justifies the motivation of SGC-GEN. In addition, sweeping $\alpha$ within $\{10^{-6},\ldots,10^{-1}\}$ does not induce drastic changes in the average rank score, which demonstrates the robustness of SGC-GEN.
Based on the average rank score of these datasets, for the following experiments we assign $\alpha=10^{-4}$  ($\alpha=10^{-6}$)
to the (regularized) SGC-GEN methods.

\begin{figure}[t]
	\hspace{-3mm}
	\begin{subfigure}[b]{0.54\linewidth}
		\includegraphics[width=\textwidth]{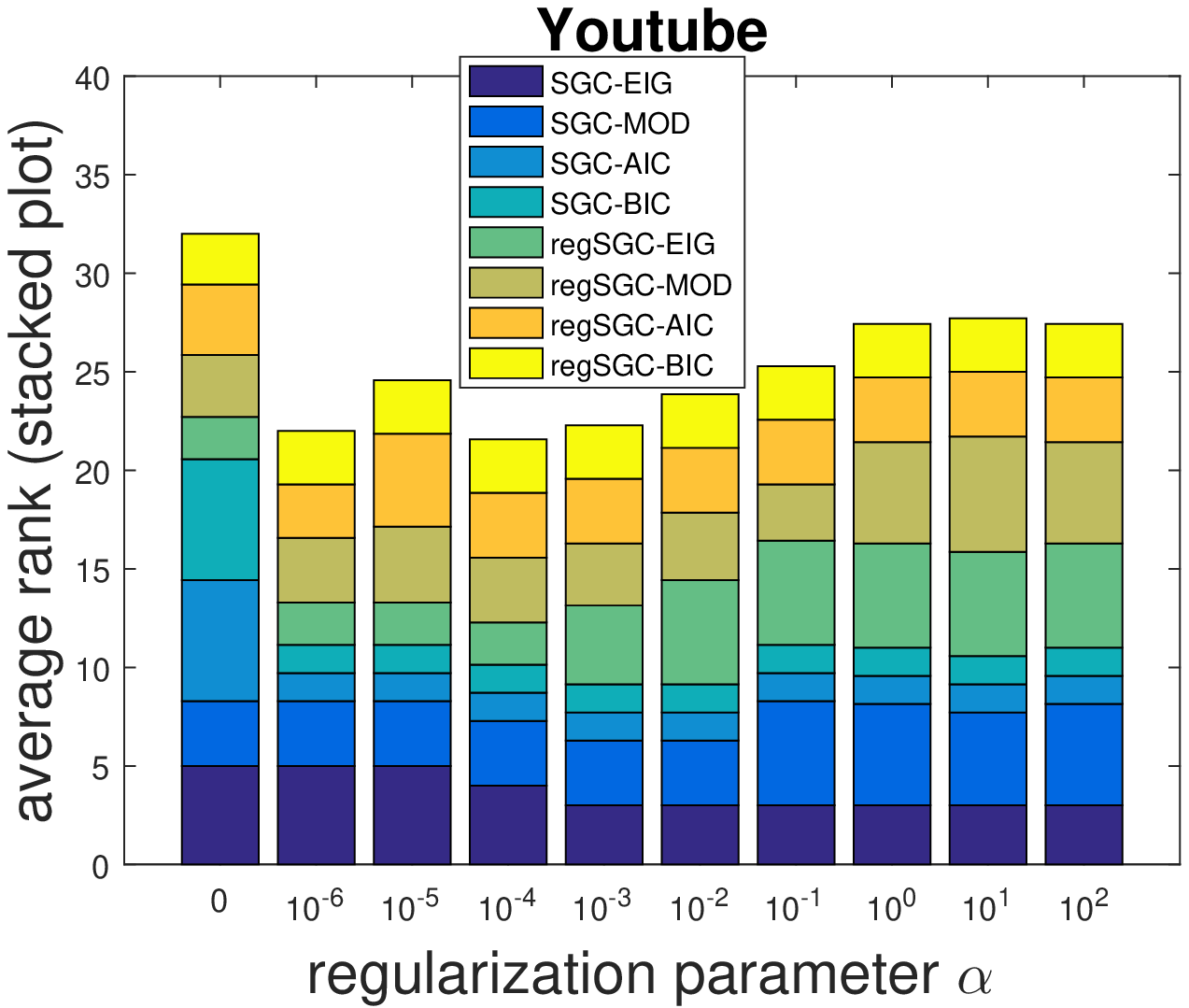}
	\end{subfigure}%
	\begin{subfigure}[b]{0.54\linewidth}
		\includegraphics[width=\textwidth]{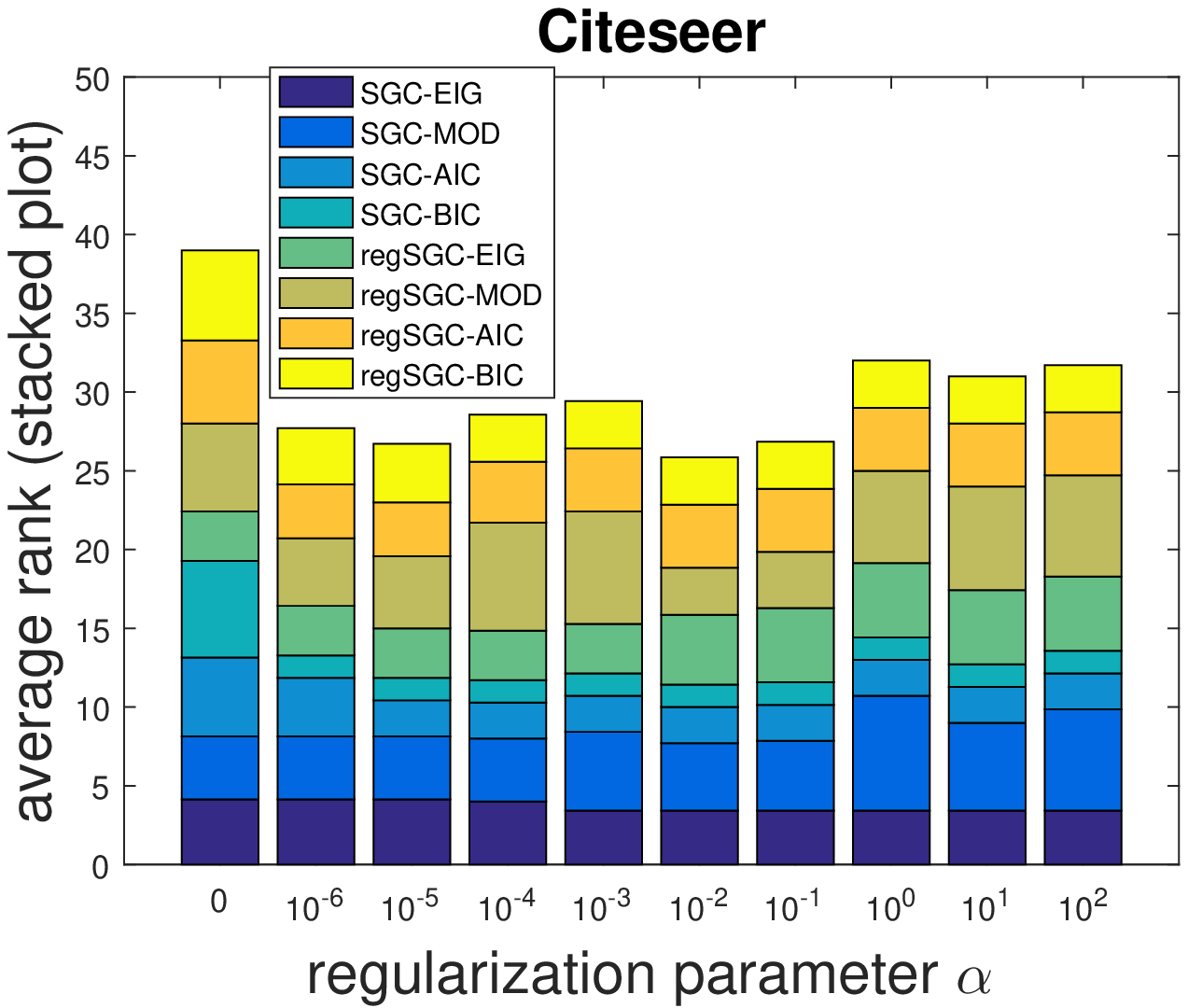}
	\end{subfigure}
	\caption{Average rank score of the 8 SGC-GEN community detection methods in Table \ref{table_SGC_GEN} with respect to different regularization parameter $\alpha$ of Youtube and Citeseer datasets. For each method, thicker block means the corresponding $\alpha$ value leads to worse clustering performance.
		Setting large $\alpha$ or neglecting the model mismatch metric ($\alpha=0$) yields poor performance. }
	\label{Fig_rank_plot}
\end{figure}

\subsection{Comparison to Baseline Methods}
Here we compare the 8 SGC-GEN methods to the 6 baseline methods  in Sec. \ref{subsec_baseline}.
For the Bayesian inference baseline methods we set $\Kmax=20$, since we observe that larger $\Kmax$ does not improve their performance but significantly increases the computation time.
For the SGC-GEN methods and Self-Tuning we set $\Kmax=50$. For Louvain one does not need to specify $\Kmax$.
All experiments are implemented by Matlab R2016 on a 16-core cluster with 128 GB RAM. 



Table \ref{table_avg_rank} displays the mean and standard deviation of average rank scores over all 18 graph datasets for each community detection method. Among these 14 methods, SGC-MOD and SGC-EIG have the best and second best mean average rank score over all datasets, which suggests that joint consideration of theoretical detectability and modular structure using the proposed SGC-GEN framework improves community detection.
The results also suggest that the degree regularization technique does not necessarily guarantee better performance.
For the baseline methods, it can be observed that Bayesian inference based approaches  lead to poor performance, which can be explained by the fact the graph datasets may not comply with the assumption of the underlying generative community models. Louvain also yields poor performance since it is a greedy algorithm that only aims to maximize one single clustering metric (i.e., modularity). 
Self-Tuning performs better than some SGC-GEN methods  but it does not prevail SGC-MOD and SGC-EIG, which can be explained by the fact that the energy function used in Self-Tuning does not exploit the discriminative power of community detectability. Since in Sec. \ref{subsec_computation} SGC-GEN is shown to be computationally as efficient as these baseline methods, we conclude that community detection via SGC-GEN yields superior performance without incurring additional computational costs.

\begin{table}[t]
	\centering
	\caption{Performance evaluation of 14 community detection methods. Lower average rank score means better performance. The 8 SGC-GEN-based methods are highlighted by brown color. The proposed SGC-MOD and SGC-EIG achieve the best and second best performance, respectively.}
	\label{table_avg_rank}
	\begin{tabular}{|c|c|c|}
		\hline
		\multirow{2}{*}{Method} & \multicolumn{2}{c|}{Average rank of all datasets} \\ \cline{2-3} 
		& mean               & standard deviation           \\ \hline
		\textcolor{brown}{SGC-EIG}                 & 4.6290             & 2.0178                       \\ \hline
		\textcolor{brown}{SGC-MOD}                 & 4.3433             & 1.3484                       \\ \hline
		\textcolor{brown}{SGC-AIC}                 & 5.5476             & 2.1481                       \\ \hline
		\textcolor{brown}{SGC-BIC}                 & 5.1468             & 1.6762                       \\ \hline
		\textcolor{brown}{regSGC-EIG}              & 6.0417             & 1.3403                       \\ \hline
		\textcolor{brown}{regSGC-MOD}             & 5.3313             & 1.3592                       \\ \hline
		\textcolor{brown}{regSGC-AIC}              & 6.1409             & 1.8277                       \\ \hline
		\textcolor{brown}{regSGC-BIC}              & 6.0298             & 1.8559                       \\ \hline
		SBM-AIC                 & 10.9385            & 1.2383                       \\ \hline
		SBM-BIC                 & 10.9385            & 1.2383                       \\ \hline
		DCSBM-AIC               & 11.5675            & 1.3735                       \\ \hline
		DCSBM-BIC               & 11.5675            & 1.3735                       \\ \hline
		Self-Tuing              & 4.9821             & 1.1923                       \\ \hline
		Louvain                 & 6.1250             & 2.2236                       \\ \hline
	\end{tabular}
\end{table}

\subsection{Comparison in graph domains and types}
For further analysis, we categorize the 18 graph datasets in Table \ref{table_dataset} into 7 domains based on their descriptions.  Fig. \ref{Fig_domain} displays the mean average rank score of each domain for 10 selected methods. It can be observed that no single community detection method  outperforms others in all domains. For example, regSGC-MOD has superior performance in online social, publication and biological networks but has poor performance in email and communication networks. SBM-BIC has the best performance in communication networks but not in other domains.  SGC-MOD has the best averaged performance over all datasets but it does not prevail others in every domain.
The
results suggest that considering the graph domain is essential for improving community detection.

We also separate the 18 datasets into two types: \textit{with community labels} or \textit{without community labels}. The corresponding average rank score is shown in Fig. \ref{Fig_type}. For the datasets with community labels, regSGC-MOD, regSGC-AIC and regSGC-BIC are outstanding, whereas for the datasets without community labels SGC-EIG and SGC-MOD prevail. Since community labels provide additional external clustering metrics, the results suggest that the  external and internal clustering metrics have different evaluation criterion.

\begin{figure}[!t]
	\hspace{-7mm}
	\includegraphics[width=4in]{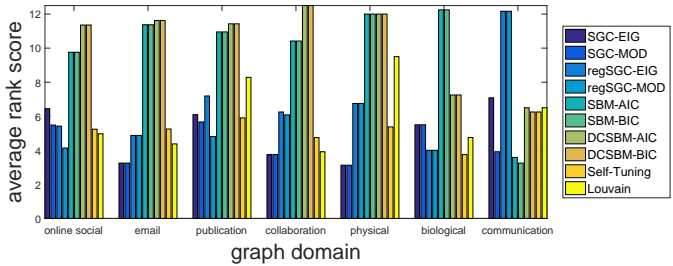}
	\caption{The mean of domain-wise average rank score. Although SGC-GEN yields the best overall performance, no single method  outperforms others in all domains.}
	\label{Fig_domain}
\end{figure}

\begin{figure}[!t]
	\centering
	\includegraphics[width=3.6in]{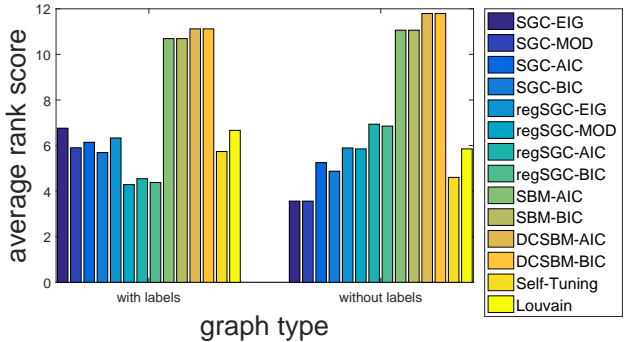}
	\caption{The mean of type-wise average rank score. The difference suggests that the external and internal clustering metrics  have different evaluation criterion.  }
	\label{Fig_type}
\end{figure}

\section{Related Work}
\label{sec_related}
Community detection and graph clustering have been an active research field in the past two decades. We refer readers to \cite{Fortunato10,fortunato2016community} for an overview  of community detection methods. In recent years, there has been a major breakthrough in analyzing both the informational and algorithmic limits of community detection under certain generative community models (GCMs). In this section we summarize the recent research findings in  community detectability. 

Many informational and algorithmic limits of community detection have been analyzed under the stochastic block model (SBM) \cite{Holland83}. Abbe et al. analyzed the informational limit by specifying the detectable and undetectable regimes for community detection via the parameters of SBM \cite{abbe2015community}. They also proposed a belief propagation algorithm that is proved to achieve the informational limit \cite{abbe2015detection}.
Hajek et al. proposed a semidefinite programming algorithm that achieves the informational limit \cite{hajek2016achieving,hajek2016achieving_ext}. 
Inference approaches based on statistical physics have been studied in \cite{Decelle11,Krzakala2013}. Spectral graph clustering algorithms, including the modularity matrix, the graph Laplacian matrix, the adjacency matrix, and the modular matrix, have been studied in \cite{rohe2011spectral,Nadakuditi12Detecability,chaudhuri2012spectral,Peixoto13,Radicchi13_hetero,CPY14modularity,saade2014spectral,Lei15,le2015concentration,joseph2016impact} and applied to various applications in graph mining \cite{CPY14deep,zhang2017name,CPY16ICASSP_short,zhang2014name,saha2015name,CPY14ComMag,dundar2015simplicity,CPY14ICASSP_short} and machine learning  \cite{peng2016recurrent,CPY17_Laplacian_tradeoff,peng2015circle,liu2017accelerated}.

Beyond the SBM, Zhao et al. proved the consistency of community detection \cite{Zhao12} under the degree-corrected SBM \cite{Karrer11}. Under the same model, Qin and Rohe studied regularized spectral clustering \cite{qin2013regularized}, and Gao et al. derived a minimax risk \cite{gao2016community}.
Chen and Hero proved the algorithmic limit of spectral clustering \cite{CPY14spectral,CPY16AMOS,CPY17MIMOSA} under the random interconnection model \cite{CPY16AMOS}. Although studying the limits of community detection methods under GCMs provides novel insights on evaluating community detectability, these approaches assume the graphs are consistent with the underlying GCMs and therefore neglect the error induced by model mismatch, which motivates the SGC-GEN framework proposed in this paper.

\section{Conclusion and Future Work}
\label{sec_conclusion}
In this paper we propose SGC-GEN, a new community detection framework that jointly exploits the discriminative power of community detectability under generative community models and confines the corresponding model mismatch. A novel condition on correct community detection is 
established for SGC-GEN, leading to effective and computationally efficient community detection methods.

Performance evaluation on 18 datasets and 7 clustering metrics shows that joint consideration of community detectability and modular structure via SGC-GEN outperforms 6 baseline approaches in terms of the average rank score. We also investigated the effect of graph domains and graph types on community detection.

The performance analysis  established  in this paper focuses on the standard formulation of SGC rooted in many advanced methods.
Our future work involves developing scalable implementation of SGC-GEN to efficiently handle large-scale graphs and extending  SGC-GEN to advanced community detection methods and models.

%
\IEEEpeerreviewmaketitle



%

\bibliographystyle{IEEEtran}
\bibliography{IEEEabrv20160824,CPY_ref_20170912}

\setcounter{equation}{0}
\setcounter{figure}{0}
\setcounter{table}{0}
\setcounter{page}{1}
\makeatletter
\renewcommand{\theequation}{S\arabic{equation}}
\renewcommand{\thefigure}{S\arabic{figure}}
\section*{{\LARGE Supplementary Material}}
\appendices
\section{Proof of Lemma 3.1 }
\label{proof_SBM_cocentration}
We separate the proof into two cases: (I) $i \neq j$, and (II) $i = j$. For case (I), notice that under SBM($K,\bP$) each entry in $\bA_{ij}$ is an independent and identical Bernoulli random variable with success probability $P_{ij}$. Let 
$\bDelta=\bA_{ij}-\bAbar_{ij}$, where $\bAbar_{ij}= P_{ij} \bone_{n_i} \bone_{n_j}^T$.
As a result, each entry in $\bDelta$ is either $1-P_{ij}$ with probability $P_{ij}$ or $-P_{ij}$ with probability $1-P_{ij}$.
The Latala's theorem \cite{Latala05} states that for any random matrix $\mathbf{M}$ with statistically independent and zero mean entries, there exists a positive constant $c_1$ such that
\begin{align}
\mathbb{E} \Lb \sigma_1(\mathbf{M})\Rb &\leq c_1 \lb \max_s \sqrt{\sum_\ell \mathbb{E} \Lb [\mathbf{M}]_{s \ell}^2 \Rb}+\max_\ell \sqrt{\sum_s \mathbb{E} \Lb [\mathbf{M}]_{s \ell}^2 \Rb} \right. \nonumber \\
&~~~\left.+ \sqrt[4]{\sum_{s \ell} \mathbb{E} \Lb [\mathbf{M}]_{s \ell}^4 \Rb}\rb,
\end{align}
where $\sigma_1(\bM)$ is the largest singular value of $\bM$.
It is clear that each entry in $\bDelta$ is independent and has zero mean. By replacing  $\mathbf{M}$ with $\frac{\bDelta}{\sqrt{n_i n_j}}$ in the Latala's theorem, since $P_{ij} \in [0,1]$, we have $\max_s\sqrt{\sum_\ell \mathbb{E} \Lb [\mathbf{M}]_{s \ell}^2 \Rb}=O(\frac{1}{\sqrt{n_i}})$, $\max_\ell \sqrt{\sum_s \mathbb{E} \Lb [\mathbf{M}]_{s \ell}^2 \Rb}=O(\frac{1}{\sqrt{n_j}})$, and $\sqrt[4]{\sum_{s \ell} \mathbb{E} \Lb [\mathbf{M}]_{s \ell}^4 \Rb}=O(\frac{1}{\sqrt[4]{n_i n_j}})$.
Therefore, $\mathbb{E} \Lb \sigma_1\lb \frac{\bDelta}{\sqrt{n_i n_j}} \rb \Rb \ra 0$ as $n_i,n_j \ra \infty$.

We then use the Talagrand's concentration inequality, which is stated as follows. Let $h: \mathbb{R}^k \mapsto \mathbb{R}$ be a convex and 1-Lipschitz function.
Let $\bx \in \mathbb{R}^k$ be a random vector and assume that every element of $\bx$ satisfies
$|[\bx]_i|  \leq C$ for all $i=1,2,\ldots,k$, with probability one.
Then there exist positive constants $c_2$ and $c_3$ such that $\forall \epsilon >0$,
\begin{align}
\text{Pr}\lb \left| h(\bx)-\mathbb{E} \Lb h(\bx)  \Rb \right| \geq \epsilon\rb \leq c_2 \exp \lb \frac{-c_3 \epsilon^2}{C^2} \rb.
\end{align}
Since $\sigma_1(\mathbf{M})=\max_{\bz^T\bz=1}||\mathbf{M} \bz||_2$ \cite{HornMatrixAnalysis}, it is easy to check that  $\sigma_1(\mathbf{M})$ is a convex and 1-Lipschitz function. Therefore, applying the Talagrand's inequality and substituting $\mathbf{M}=\frac{\bDelta}{\sqrt{n_i n_j}}$ with the facts that $\mathbb{E} \Lb \sigma_1\lb \frac{\bDelta}{\sqrt{n_i n_j}} \rb \Rb \ra 0$ and $\frac{[\bDelta]_{s \ell}}{\sqrt{n_i n_j}} \leq \frac{1}{\sqrt{n_i n_j}}$, we have
\begin{align}
\text{Pr}\lb  \sigma_1 \lb \frac{\bDelta}{\sqrt{n_i n_j}} \rb \geq \epsilon \rb \leq c_2 \exp \lb -c_3 n_i n_j \epsilon^2 \rb.
\end{align}
Note that, since $n_i n_j \geq \frac{n_i+n_j}{2}$
for any positive integer $n_i,n_j >0$,   we have
$\sum_{n_i,n_j} c_2 \exp \lb -c_3 n_i n_j \epsilon^2 \rb < \infty$.
Hence, by the Borel-Cantelli lemma \cite{Resnick13},
$\sigma_1 \lb \frac{\bDelta}{\sqrt{n_i n_j}} \rb \asconv 0$
when $\none, \ntwo \ra \infty$, where $\asconv$ denotes almost sure convergence.
Using the result from standard matrix perturbation theory \cite{HornMatrixAnalysis} yields
$|\sigma_k(\bAbar_{ij}+\bDelta)-\sigma_k(\bAbar_{ij})| \leq \sigma_1(\bDelta)$
for all $k$, where $\sigma_k$ denotes the $k$-th largest singular value. By the fact that  $\sigma_1\lb \frac{\bDelta}{\sqrt{ n_i n_j}} \rb \asconv 0$,  we have as $n_i,n_j \ra \infty$,
\begin{align}
&\sigma_1\lb \frac{\bA_{ij}}{\sqrt{n_i n_j}} \rb=\sigma_1\lb \frac{\bAbar_{ij}+\bDelta}{\sqrt{n_i n_j}} \rb \asconv \sigma_1\lb \frac{\bAbar_{ij}}{\sqrt{n_i n_j}} \rb=P_{ij}; \\
&\sigma_k \lb \frac{\bA_{ij}}{\sqrt{n_i n_j}} \rb \asconv 0,~\forall~k \geq 2,
\end{align}
which implies 
\begin{align}
\frac{\bA_{ij}}{\sqrt{n_i n_j}} \asconv P_{ij} \frac{\bone_{n_i}}{\sqrt{n_i}} \frac{\bone_{n_j}^T}{\sqrt{n_j}} 
\end{align}
as $n_i,n_j \ra \infty$. Finally, for every $i,j \in \{1,\ldots,K\}$, $i \neq j$,
\begin{align}
\frac{\bA_{ij}}{n} = \frac{\bA_{ij}}{\sqrt{n_i n_j}} \cdot \frac{\sqrt{n_i n_j}}{n}  \asconv P_{ij} \sqrt{\rho_i \rho_j}  \frac{\bone_{n_i}}{\sqrt{n_i}} \frac{\bone_{n_j}^T}{\sqrt{n_j}} 
\end{align}
as $n_i,n_j \ra \infty$ and $\frac{\nmin}{\nmax} \ra c>0$, which completes the proof of case (I).

For case (II), since $\bA_{ii}$ accounts for the adjacency matrix of edges within community $i$, it is a symmetric matrix with zeros on its main diagonal. Let $\bAt$ denote the matrix that has the same entries as $\bA_{ii}$ in the upper diagonals and has zero entries in the lower diagonals. Then $\bA_{ii}=\bAt+\bAt^T$. 
Applying the Latala's theorem and the Talagrand's concentration inequality to $\frac{\bAt}{n_i}$, we have
\begin{align}
\frac{\bA_{ii}}{n_i} = \frac{\bAt+\bAt^T+ P_{ii} \bI_{n_k}}{n_i}  - \frac{P_{ii} \bI_{n_k}}{n_i} \asconv P_{ii} \frac{\bone_{n_i}}{\sqrt{n_i}} \frac{\bone_{n_i}^T}{\sqrt{n_i}} 
\end{align}
as $n_i \ra \infty$ due to the fact that $\frac{P_{ii} \bI_{n_k}}{n_i} \ra \bO$, where $\bO$ denotes the matrix of zeros. As a result, for every $i \in \{1,\ldots,K\}$,
\begin{align}
\frac{\bA_{ii}}{n} = \frac{\bA_{ii}}{n_i} \cdot \frac{n_i}{n}  \asconv \rho_i P_{ii} \frac{\bone_{n_i}}{\sqrt{n_i}} \frac{\bone_{n_i}^T}{\sqrt{n_i}} 
\end{align}
as $n_i \ra \infty$ and $\frac{\nmin}{\nmax} \ra c>0$, which completes the proof of case (II).

\section{Proof of Theorem 3.2}
\label{proof_SGC_condition}
\subsection{Optimality condition of $\bY$: general case}
\label{proof_SGC_condition_general}
Recall from Sec. \ref{subsec_NGL} that the eigenvector matrix $\bY=[\by_2~\ldots~\by_K]$ of $\bLN$ is the solution of the minimization problem 
\begin{align}
\label{eqn_SGC_3}
\min_{\bX \in \bbR^{n \times (K-1)},~\bX^T \bX = \bI_{K-1},~\bX^T \bD^{\frac{1}{2}} \bone_n=\bzero_{K-1} } \trace(\bX^T \bLN \bX).
\end{align}
Using (\ref{eqn_SGC_3}), we can write the Lagrangian function $\Gamma(\bX)$ of the minimization problem as 
\begin{align}
\label{eqn_Lagrangian_multi}
\Gamma(\bX)&=\trace(\bX^T \bLN \bX)-\bnu^T \bX^T \bD^{\frac{1}{2}}\bone_n \nonumber \\
&~~~- \trace \lb \bU (\bX^T \bX-\bI_{K-1}) \rb,
\end{align}
where
$\bnu \in \mathbb{R}^{K-1}$ and $\bU \in \mathbb{R}^{(K-1) \times (K-1)}$ with $\bU=\bU^T$ are the Lagrange multiplier of the constraints $\bX^T \bD^{\frac{1}{2}} \bone_n=\bzero_{K-1}$ and $\bX^T \bX= \bI_{K-1}$, respectively.

Using the Karush-Kuhn-Tucker (KKT) conditions \cite{Boyd04}, $\bY$ satisfies the first-order optimality condition of $\Gamma(\bX)$. That is, using matrix calculus and
differentiating (\ref{eqn_Lagrangian_multi}) with respect to $\bX$, we have 
\begin{align}
\label{eqn_opt_1}
\frac{d\Gamma(\bX)}{d \bX}= 2 \bLN \bX - \bD^{\frac{1}{2}} \bone_n \bnu^T - 2 \bX \bU,
\end{align}
which implies the optimality condition of $\bY$ is
\begin{align}
\label{eqn_opt_2}
2 \bLN \bY - \bD^{\frac{1}{2}} \bone_n \bnu^T - 2 \bY \bU = \bO_{n \times (K-1)},
\end{align}
where $\bO_{n \times K}$ denotes the $n \times K$ matrix of zeros.
Furthermore, left multiplying (\ref{eqn_opt_2}) by $(\bD^{\frac{1}{2}} \bone_n)^T$, we obtain 
\begin{align}
\label{eqn_opt_3}
\bone_n^T \bD \bone_n \bnu^T = \bzero_{K-1}^T
\end{align}
due to that fact that $\bLN \bD^{\frac{1}{2}} \bone_n = \bzero_{n}$ and $\bY^T \bD^{\frac{1}{2}} \bone_n=\bzero_{K-1}$.
Since $\bone_n^T \bD \bone_n=2m>0$ is the total degree of the graph, from (\ref{eqn_opt_3}) we conclude that $\bnu = \bzero_{K-1}$, which in turn simplifies the optimality condition in (\ref{eqn_opt_2}) as
\begin{align}
\label{eqn_opt_4}
\bLN \bY = \bY \bU.
\end{align}
Let $\Lambda=\diag([\lambda_2,\ldots,\lambda_K])$ be a diagonal matrix of the eigenvalues $\{\lambda_k\}_{k=2}^K$. Left multiplying (\ref{eqn_opt_4}) by $\bY^T$, we obtain 
\begin{align}
\label{eqn_opt_5}
\bU=\bY^T \bLN \bY = \Lambda
\end{align}
due to the fact that $\bY^T \bY = \bI_{K-1}$.

To investigate the relationship between $\bY$ and the community structure, we denote the rows in $\bY$ indexed by the nodes in community $k$ by
$\bY_k \in \bbR^{n_k \times (K-1)}$ such that $\bY=[\bY_1^T~\ldots~\bY_K^T]^T$. We use similar notation for $\bX$ such that $\bX=[\bX_1^T~\ldots~\bX_K^T]^T$. Furthermore, the matrix $\bLN$ is partitioned into a $K \times K$ block matrix, where the block ${\bLN}_{ij}$ is the $n_i \times n_j$ submatrix of $\bLN$ indexed by the community labels $i$ and $j$, $i,j \in \{1,\ldots,K\}$.
Given the fact that $\bnu=\bzero_{K-1}$, the Lagrangian function in (\ref{eqn_Lagrangian_multi}) can be written in terms of $\{ \bX_k\}_{k=1}^K$ and $\{ {\bLN}_{ij}\}_{i,j=1}^K$, which is
\begin{align}
\label{eqn_Lagrangian_multi_2}
\Gamma(\bX)&=\sum_{i=1}^K \sum_{j=1}^K \trace(\bX_i^T {\bLN}_{ij} \bX_j) - \sum_{i=1}^K \trace \lb \bU \bX_i^T \bX_i \rb \nonumber\\
&~~~+ \trace(\bU).
\end{align}
Differentiating $\Gamma(\bX)$ with respect to $\bX_k$, we have 
\begin{align}
\label{eqn_opt_6}
\frac{d\Gamma(\bX)}{d \bX_k}= 2 {\bLN}_{kk} \bX_k + 2 \sum_{j=1,j \neq k}^K {\bLN}_{kj} \bX_j - 2 \bX_k \bU,
\end{align}
which implies the optimality condition of $\bY$ in terms of $\{\bY_k\}_{k=1}^K$ is
\begin{align}
\label{eqn_opt_7}
&{\bLN}_{kk} \bY_k + \sum_{j=1,j \neq k}^K {\bLN}_{kj} \bY_j -  \bY_k \bU=\bO_{n_k \times (K-1)}, \nonumber \\
&~~~\forall~k \in \{1,\ldots,K\}.
\end{align}

\subsection{Optimality condition of $\bY$ under SBM($K$,$\bP$)}
Here we proceed to study the optimality condition of $\bY$ developed in Appendix \ref{proof_SGC_condition_general} under the assumption of SBM($K$,$\bP$). Unless specified, all convergence results are with respect to the condition when $n_k \ra \infty$, $\forall~k \in \{1,\ldots,K\}$, and $\frac{\nmin}{\nmax} \ra c>0$. 

Using the block representation $\{\bA_{ij}\}_{i,j=1}^K$ for the adjacency matrix $\bA$, define the block representation of the diagonal degree matrix $\bD$ as $\bD=[\bD_1^T~\ldots~\bD_K^T]^T$, where $\bD_k=\diag(\sum_{j=1}^{K} \bA_{kj} \bone_{n_j})$. Using Lemma \ref{lemma_SBM_concentration}, we have for every $k \in \{1,\ldots,K\}$,
\begin{align}
\label{eqn_SBM_degree}
\frac{\bD_k}{n}=\frac{\diag(\sum_{j=1}^{K} \bA_{kj} \bone_{n_j})}{n} \asconv \sum_{j=1}^K \sqrt{\rho_k \rho_j} P_{ij} \bI_{n_k}.
\end{align}
Similarly, the block representation of the unnormalized graph Laplacian matrix $\bL=\bD-\bA$, denoted by $\{\bL_{ij}\}_{i,j=1}^K$, has the following relation based on Lemma \ref{lemma_SBM_concentration}.
\begin{align}
\label{eqn_SBM_UGL}
\frac{\bL_{ij}}{n}=
\left\{
\begin{array}{ll}
\frac{\bD_i}{n}-\frac{\bA_{ii}}{n} \asconv  \sum_{k=1}^K \sqrt{\rho_i \rho_k} P_{ik} \bI_{n_i} - \rho_i P_{ii} \frac{\bone_{n_i}}{\sqrt{n_i}} \frac{\bone_{n_i}^T}{\sqrt{n_i}}, \\ \text{~if~} i=j; \\
-\frac{\bA_{ij}}{n} \asconv - \sqrt{\rho_i \rho_j} P_{ij} \frac{\bone_{n_i}}{\sqrt{n_i}} \frac{\bone_{n_j}^T}{\sqrt{n_j}}, \text{~if~} i \neq j.
\end{array}
\right.
\end{align}
Putting these pieces together, since by definition
\begin{align}
\bLN=\bD^{-\frac{1}{2}} \bL \bD^{-\frac{1}{2}}=\lb \frac{\bD}{n} \rb ^{-\frac{1}{2}} \frac{\bL}{n} \lb \frac{\bD}{n}\rb^{-\frac{1}{2}},
\end{align}
the blocks $\{ {\bLN}_{ij}\}_{i,j=1}^K$ of $\bLN$ satisfy
\begin{align}
\label{eqn_SBM_NGL}
\frac{{\bLN}_{ij}}{n}&=\lb \frac{\bD_i}{n} \rb ^{-\frac{1}{2}} \frac{\bL_{ij}}{n} \lb \frac{\bD_j}{n}\rb^{-\frac{1}{2}} \\
& \asconv
\left\{
\begin{array}{ll}
\bI_{n_i} - \frac{\rho_i P_{ii} \frac{\bone_{n_i}}{\sqrt{n_i}} \frac{\bone_{n_i}^T}{\sqrt{n_i}}}{a_i^2}, & \text{~if~} i=j; \\
\frac{- \sqrt{\rho_i \rho_j} P_{ij} \frac{\bone_{n_i}}{\sqrt{n_i}} \frac{\bone_{n_j}^T}{\sqrt{n_j}}}{a_i a_j}, &\text{~if~} i \neq j,
\end{array}
\right.
\end{align}
where  $a_k>0$ is defined as 
\begin{align}
\label{eqn_SBM_NGL_2}
a_k=\sqrt{ \sum_{j=1}^K \sqrt{\rho_k \rho_j} P_{kj}},~\forall~k \in \{1,\ldots,K\}.
\end{align}

Applying (\ref{eqn_SBM_NGL}) to the optimality condition in (\ref{eqn_opt_7}) gives
\begin{align}
\label{eqn_opt_SBM_1}
&\bY_k - \sum_{j=1}^K \frac{\sqrt{\rho_k \rho_j} P_{kj} \frac{\bone_{n_k}}{\sqrt{n_k}} \frac{\bone_{n_j}^T}{\sqrt{n_j}} \bY_j}{a_k a_j} - \bY_k \bU \asconv \bO_{n_k \times (K-1)}, \nonumber \\
&~~~\forall~k \in \{1,\ldots,K\}.
\end{align}
 As a result, (\ref{eqn_opt_SBM_1}) is the asymptotic optimality condition of $\bY$ under SBM($K,\bP$).
In the sequel we will use  (\ref{eqn_opt_SBM_1}) to specify the feasibility of community detection using $\bY$.

\subsection{Undetectable regime for community detection}
\label{subproof_infeasible}
Left multiplying the optimality condition (\ref{eqn_opt_SBM_1}) of $\bY$ under SBM($K,\bP$) by $\frac{\bone_{n_k}^T}{\sqrt{n_k}}$, we have 
\begin{align}
\label{eqn_opt_SBM_2}
&\frac{\bone_{n_k}^T \bY_k}{\sqrt{n_k}}  ( \bI_{K-1}- \bU) - \sum_{j=1}^K \frac{\sqrt{\rho_k \rho_j} P_{kj} \frac{\bone_{n_j}^T}{\sqrt{n_j}} \bY_j}{a_k a_j} \asconv \bzero_{K-1}^T, \nonumber \\
&~~~\forall~k \in \{1,\ldots,K\}.
\end{align}
A trivial solution for $\{\bY_k\}_{k=1}^K$ to satisfy (\ref{eqn_opt_SBM_2}) is
\begin{align}
\label{eqn_SBM_infeasible}
 \frac{\bY_k^T \bone_{n_k}}{\sqrt{n_k}}  \asconv \bzero_{K-1},~\forall~k \in \{1,\ldots,K\}.
\end{align}
Moreover, applying (\ref{eqn_SBM_infeasible}) to (\ref{eqn_opt_SBM_1}) gives
\begin{align}
\label{eqn_opt_SBM_3}
\bY_k (\bI_{K-1}-\bU) \asconv \bO_{n_k \times (K-1)},~\forall~k \in \{1,\ldots,K\}.
\end{align}
Since the orthogonality and unit-norm constraint $\bY^T \bY = \bI_{K-1}$ is equivalent to $\sum_{k=1}^{K} \bY_k^T \bY_k=\bI_{K-1}$,  
left multiplying (\ref{eqn_opt_SBM_3}) by $\bY_k^T$ and summing over $k=1,\ldots,K$ gives
\begin{align}
\label{eqn_opt_SBM_4}
\bU \asconv \bI_{K-1}
\end{align}
when (\ref{eqn_SBM_infeasible}) holds. Conversely, if (\ref{eqn_opt_SBM_4}) holds, applying it to  (\ref{eqn_opt_SBM_1}) gives
\begin{align}
\label{eqn_opt_SBM_5}
&\sum_{j=1}^K \frac{\sqrt{\rho_k \rho_j} P_{kj} \frac{\bone_{n_k}}{\sqrt{n_k}} \frac{\bone_{n_j}^T}{\sqrt{n_j}} \bY_j}{a_k a_j} \asconv \bO_{n_k \times (K-1)},\nonumber\\
&~~~\forall~k \in \{1,\ldots,K\}.
\end{align}
Since (\ref{eqn_opt_SBM_5}) holds for any positive $\{\rho_k\}_{k=1}^K$, $\{a_k\}_{k=1}^K$ and $\{P_{kj}\}_{k,j=1}^K$, it implies the condition in (\ref{eqn_SBM_infeasible}). Consequently, we have established that $\frac{\bY_k^T \bone_{n_k}}{\sqrt{n_k}}   \asconv \bzero_{K-1},~\forall~k \in \{1,\ldots,K\}$ if and only if $ \bU \asconv \bI_{K-1}$.

Note that the condition in (\ref{eqn_SBM_infeasible}) shows the rows of $\bY_k$ sum to a zero vector for each $k$, which implies the row representation of nodes in the same community is incoherent. That is, for each column in $\bY_k$, the sum of nonzero entries is zero, which implies the nonzero entries in each column have alternating signs and hence the row representation is not identical for nodes in the same community.
Furthermore, when one runs K-means clustering on the rows of $\bY$, the centroid of each community collapses to the same point due to   (\ref{eqn_SBM_infeasible}), which makes correct community detection impossible. Similar results can be concluded when one adopts the row normalization step and use $\bhY$ for community detection as described in Algorithm \ref{algo_SGC}, since the row normalization step does not alter the sign of each entry in $\bhY$. As a result, community detection using $\bLN$ is said to be in the undetectable regime if  (\ref{eqn_SBM_infeasible}) holds. 

Recall the definition $\theta=\sum_{k=2}^K 1-\lambda_k$ as defined in Theorem \ref{thm_NGL_SBM}.
Taking the trace on both sides in  (\ref{eqn_SBM_infeasible}) and using (\ref{eqn_opt_5}), we have 
\begin{align}
\trace(\bU)=\trace(\bLambda)=\sum_{k=2}^K \lambda_k=\trace(\bI_{K-1})=K-1,
\end{align}
which implies community detection using $\bY$ of $\bLN$ is undetectable if and only if $\theta=0$.

\subsection{Detectable regime for community detection}
Appendix \ref{subproof_infeasible} shows that the trivial solution (\ref{eqn_SBM_infeasible}) to the optimality condition under SBM($K,\bP$) in (\ref{eqn_opt_SBM_2}) results in incorrect community detection. Here we investigate the other solution to  (\ref{eqn_opt_SBM_1}) and show that this nontrivial solution leads to correct community detection, which is called the detectable regime for community detection.

Using (\ref{eqn_opt_SBM_1}), we can rewrite it as 
\begin{align}
\label{eqn_opt_SBM_6}
&\bY_k (\bI_{K-1} - \bU) \asconv \sum_{j=1}^K \frac{\sqrt{\rho_k \rho_j} P_{kj} \frac{\bone_{n_k}}{\sqrt{n_k}} \frac{\bone_{n_j}^T}{\sqrt{n_j}} \bY_j}{a_k a_j}, \nonumber \\
&~~~\forall~k \in \{1,\ldots,K\}.
\end{align}
Left multiplying $\bY_k^T$ to (\ref{eqn_opt_SBM_6}) and summing over $k \in \{1,\ldots,K\}$ gives
\begin{align}
\label{eqn_opt_SBM_6_2}
\bU \asconv \bI_{K-1} - \sum_{k=1}^K \sum_{j=1}^K \frac{\sqrt{\rho_k \rho_j} P_{kj} \bY_k^T \frac{\bone_{n_k}}{\sqrt{n_k}} \frac{\bone_{n_j}^T}{\sqrt{n_j}} \bY_j}{a_k a_j}.
\end{align}
Recall from (\ref{eqn_opt_5}) that $\bU=\bLambda$ and hence $\bU$ is a diagonal matrix with positive entries on its diagonal.
If the matrix $\bI_{K-1} - \bU$ is invertible, then by (\ref{eqn_opt_SBM_6}),
\begin{align}
\label{eqn_opt_SBM_7}
\bY_k &\asconv  \sum_{j=1}^K \frac{\sqrt{\rho_k \rho_j} P_{kj} \frac{\bone_{n_k}}{\sqrt{n_k}} \frac{\bone_{n_j}^T}{\sqrt{n_j}} \bY_j}{a_k a_j}  (\bI_{K-1} - \bU)^{-1} \\
\label{eqn_opt_SBM_7_2}
&= \frac{\bone_{n_k}}{\sqrt{n_k}} \bb_k^T,~\forall~k \in \{1,\ldots,K\},
\end{align}
where the $(K-1) \times 1$ vector $\bb_k$ is defined as
\begin{align}
\label{eqn_opt_SBM_8}
\bb_k=  (\bI_{K-1} - \bU)^{-1} \sum_{j=1}^K \frac{\sqrt{\rho_k \rho_j} P_{kj}   \bY_j^T \frac{\bone_{n_j}}{\sqrt{n_j}}}{a_k a_j}.
\end{align}
The result of (\ref{eqn_opt_SBM_7_2}) implies each block $\bY_k$ in $\bY$ has coherent row representation, which means the vector space representation of nodes in the same community is identical. The next step is to show that the row representation of each $\bY_k$ is distinct, and hence inspecting the distribution of rows in $\bY$ leads to correct community detection.

Using (\ref{eqn_opt_SBM_7_2}) and (\ref{eqn_SBM_degree}), the orthogonality and unit-norm constraints $\sum_{k=1}^K \bY_k^T \bY_k=\bI_{K-1}$ and $\bY^T \bD^{\frac{1}{2}} \bone_n=\bzero_{K-1}$ yield
\begin{align}
\label{eqn_opt_SBM_9}
\sum_{k=1}^K \bb_k \bb_k^T \asconv \bI_{K-1};\\
\label{eqn_opt_SBM_10}
\sum_{k=1}^K a_k \bb_k  \asconv \bzero_{K-1},
\end{align}
where $a_k>0$ is defined in (\ref{eqn_SBM_NGL_2}).

The result in (\ref{eqn_opt_SBM_9})
imply that some $\bb_k$ cannot be a zero vector since
\begin{align}
\label{eqn_opt_SBM_13}
\sum_{k=1}^K  [\bb_k]_j^2=1,~\forall~j \in\{1,\ldots,K-1\}.
\end{align}
Furthermore, by (\ref{eqn_opt_SBM_9}) and (\ref{eqn_opt_SBM_10}),
we have
\begin{align}
\label{eqn_opt_SBM_11}
\sum_{k:[\bb_k]_j>0} a_k [\bb_k]_j &= - \sum_{k: [\bb_k]_j <0}  a_k [\bb_k]_j,~\forall~j \in\{1,\ldots,K-1\}; \\
\label{eqn_opt_SBM_12}
\sum_{k:[\bb_k]_i [\bb_k]_j>0}  [\bb_k]_i [\bb_k]_j &= - \sum_{k: [\bb_k]_i [\bb_k]_j <0}  [\bb_k]_i [\bb_k]_j, \nonumber \\&~~~\forall~i,j \in\{1,2,\ldots,K-1\}, i \neq j.
\end{align}
Combining the results in (\ref{eqn_opt_SBM_7_2}), (\ref{eqn_opt_SBM_13}),  (\ref{eqn_opt_SBM_11}) and (\ref{eqn_opt_SBM_12}) leads to the following conclusion:
\begin{enumerate}
\item The columns of $\bY_k$ are constant vectors.
\item Each column of $\bY$ has at least two nonzero community-wise constant components, and these constants have alternating signs such that their weighted sum equals $0$ (i.e., $\sum_{k} a_k [\bb_k]_j = 0,~\forall~j \in\{1,\ldots,K-1\}$).
\item No two columns of $\bY$ have the same sign on the community-wise nonzero components.	
\end{enumerate}
As a result, we have proved that the rows in each $\bY_k$ have identical row representation, and the row representation of each $\bY_k$ is distinct. More importantly, the results suggest that in the vector space representation the within-cluster distance between any pair of row vectors in each $\bY_k$ is zero, whereas the between-cluster distance between any two row vectors of different clusters is nonzero. This suggests that the ground-truth communities are the optimal solution to K-means clustering, and
hence K-means clustering on the rows of $\bY$ can yield correct communities.

Since  $\bU=\bLambda$ by ($\ref{eqn_opt_5}$), comparing (\ref{eqn_opt_SBM_6_2}) in the detectable regime to (\ref{eqn_opt_SBM_4}) in the undetectable regime, one can see that the changes in the edge connection matrix $\bP$ lead to changes in the eigenvalue matrix $\Lambda$, and $\theta=\trace(I_{K-1}-\bU)=\sum_{k=2}^K 1 - \lambda_k \geq 0$.
As a result, the parameters $\bP$ can also be separated into the detectable and undetectable  regimes for community detection. 
Lastly, we have established correct community detection under SBM($K,\bP$) provided that $\bI_{K-1} - \bU$ is invertible, which implies $\theta>0$. Notice that $\bI_{K-1} - \bU$ is not invertible when $\bU \asconv \bI_{K-1} $, which leads to the undetectable  regime as discussed in Sec. \ref{subproof_infeasible}. Consequently, the $K$ communities  under SBM($K,\bP$) can be correctly detected using $\bY$ if and only if $\theta>0$.

\subsection{Summary: the distribution of the rows in $\bY$}
Here we summarize the established theoretical analysis for community detectability using the eigenvector matrix $\bY$ of the normalized graph Laplacian matrix $\bLN$ for graphs generated by $SBM(K,\bP)$.
\begin{enumerate}
\item The community-indexed block matrix $\bY_k$ of $\bY$ satisfies the optimality condition in (\ref{eqn_opt_SBM_1}). Moreover, the distribution of the rows in $\bY$ is either in the detectable regime or the undetectable regime for community detection.
\item In the undetectable regime, the rows in each $\bY_k$ sum to a zero vector, resulting in incorrect community detection. In addition, $\bY$ is in the undetectable community detection regime if and only if $\theta=\sum_{k=2}^K 1- \lambda_k=0$.
\item In the detectable regime, the rows in  each $\bY_k$ have identical representation, and each row representation of $\bY_k$ is distinct, resulting in correct community detection using K-means clustering on the rows of $\bY$.  In addition, $\bY$ is in the detectable community detection regime if and only if $\theta>0$.
\end{enumerate}

\section{Proof of Corollary 3.3}
\label{proof_SBM_two}
When restricted to the case of SBM(2,$\bP$), where $P_{11}=p_1$, $P_{22}=p_2$ and $P_{12}=P_{21}=q$, the parameter $a_k$ in (\ref{eqn_SBM_NGL_2}) can be simplified to 
\begin{align}
\label{eqn_SBM_two_1}
&a_1=\sqrt{ \rho_1 p_1 + \sqrt{\rho_1 \rho_2}q}; \\
\label{eqn_SBM_two_1_2}
&a_2=\sqrt{ \rho_2 p_2 + \sqrt{\rho_1 \rho_2}q}. 
\end{align}		
In addition, using (\ref{eqn_SBM_degree}) and the orthogonality constraint $\by^T \bD^{\frac{1}{2}} \bone_n=0$ of the second smallest eigenvector $\by_2=[\byt_1^T~\byt_2^T]^T$ gives
\begin{align}
\label{eqn_SBM_two_2}
a_1 \byt_1^T \frac{\bone_{n_1}}{\sqrt{n_1}} + a_2 \byt_2^T \frac{\bone_{n_2}}{\sqrt{n_2}} \asconv 0.
\end{align}	
Applying (\ref{eqn_SBM_two_2}) to (\ref{eqn_opt_SBM_6_2}) gives
\begin{align}
\label{eqn_SBM_two_3}
\bU &\asconv 1-\frac{\rho_1 p_1 (\byt_1^T  \frac{\bone_{n_1}}{\sqrt{n_1}})^2 }{a_1^2}-\frac{\rho_2 p_2 a_1^2 (\byt_1^T  \frac{\bone_{n_2}}{\sqrt{n_2}})^2 }{a_2^4} \nonumber \\
&~~~+ \frac{2 \sqrt{\rho_1 \rho_2} q (\byt_1^T  \frac{\bone_{n_1}}{\sqrt{n_1}})^2}{a_2^2} \\
\label{eqn_SBM_two_3_2}
&= 1 - \frac{\Lb \lb \frac{a_2}{a_1} \rb^2 \rho_1 p_1 + \lb \frac{a_1}{a_2} \rb^2 \rho_2 p_2  - 2  \sqrt{\rho_1 \rho_2} q \Rb(\byt_1^T  \frac{\bone_{n_1}}{\sqrt{n_1}})^2}{a_2^2}.
\end{align}	
Substituting (\ref{eqn_SBM_two_1}) and (\ref{eqn_SBM_two_1_2}) to (\ref{eqn_SBM_two_3_2}) and factoring the resulting term, we have
\begin{align}
\label{eqn_SBM_two_4}
\bU \asconv 
1 - \frac{\rho_1 \rho_2 \Lb 2 \sqrt{\rho_1 \rho_2} + \rho_1 p_1 + \rho_2 p_2 \Rb \lb p_1 p_2 - q^2 \rb (\byt_1^T  \frac{\bone_{n_1}}{\sqrt{n_1}})^2}{a_2^2}.
\end{align}	
Comparing (\ref{eqn_SBM_two_4}) to (\ref{eqn_opt_SBM_4}), SBM($2,\bP$) is in the detectable regime for community detection if and only if  $q<\sqrt{p_1 p_2}$, and it is in the undetectable regime for community detection if and only if $q \geq \sqrt{p_1 p_2}$.

Lastly, if $q<\sqrt{p_1 p_2}$, then from (\ref{eqn_opt_SBM_8}) we have 
\begin{align}
\label{eqn_SBM_two_5}
\byt_1 \asconv b_1  \frac{\bone_{n_1}}{\sqrt{n_1}};~\byt_2 \asconv b_2  \frac{\bone_{n_2}}{\sqrt{n_2}}
\end{align}
for some $b_1$ and $b_2$. Applying (\ref{eqn_SBM_two_5}) to (\ref{eqn_SBM_two_2}) and the unit-norm constraint $\by_2^T \by_2=\byt_1^T \byt_1 + \byt_2^T \byt_2=1$ gives
\begin{align}
\label{eqn_SBM_two_6}
b_1 = \frac{\pm a_2 }{\sqrt{a_1^2+a_2^2}};~b_2 = \frac{\mp a_1 }{\sqrt{a_1^2+a_2^2}}.
\end{align}
Setting $\beta_1=\frac{ a_2 }{\sqrt{a_1^2+a_2^2}}$ and $\beta_2=\frac{ a_1 }{\sqrt{a_1^2+a_2^2}}$ completes the proof.

\end{document}